\newif\ifsup\suptrue
\newtheorem{theorem}{Theorem}
\newtheorem{definition}{Definition}
\newtheorem{remark}{Remark}
\newtheorem{lemma}{Lemma}
\newtheorem{claim}{Fact}
\definecolor{dkblue}{cmyk}{1,.54,.04,.19} 
\crefname{claim}{Fact}{Facts}
\newcommand{\argmax}{\operatornamewithlimits{arg\,max}}
\newcommand{\argmin}{\operatornamewithlimits{arg\,min}}
\newcommand{\R}{\mathbb{R}}
\newcommand{\E}{\mathbb{E}}
\newcommand{\e}{\mathbf{e}}
\newcommand{\ones}{\mathbf{1}_k}
\newcommand{\D}{\operatorname{D}}
\newcommand{\ip}[1]{\langle #1 \rangle}
\newcommand{\norm}[1]{\Vert #1 \Vert}
\newcommand{\Reg}{\mathfrak{R}}
\newcommand{\bbI}{\mathbb I}
\newcommand{\dom}{\operatorname{dom}}
\let\epsilon\varepsilon
\newcounter{equationnotes}
\newcounter{writtennotes}
\newcommand{\eqnote}[1]{\Eqnotes(\theequationnotes)={#1}\mbox{(\alph{equationnotes})}\stepcounter{equationnotes}}
\newcommand{\writeeqnotes}{\addtocounter{equationnotes}{-\thewrittennotes}\multido{\iActor=\thewrittennotes+1}{\value{equationnotes}}{%
(\alph{writtennotes}) \Eqnotes(\iActor)\stepcounter{writtennotes}}\setcounter{equationnotes}{\thewrittennotes}\setcounter{writtennotes}{\theequationnotes}}
\newcommand{\resetnotes}{\setcounter{equationnotes}{1}\setcounter{writtennotes}{1}}
\newcommand{\lr}[1]{\left (#1\right)}
\newcommand{\lrc}[1]{\left \{#1\right\}}
\begin{document}

%

%

\twocolumn[

\runningtitle{An Optimal Algorithm for Adversarial Bandits with Arbitrary Delays}
\aistatstitle{An Optimal Algorithm for Adversarial Bandits\\ with Arbitrary Delays}

\aistatsauthor{ Julian Zimmert \And Yevgeny Seldin }

\aistatsaddress{ University of Copenhagen } 
]

\begin{abstract}
We propose a new algorithm for adversarial multi-armed bandits with unrestricted delays. The algorithm is based on a novel hybrid regularizer applied in the Follow the Regularized Leader (FTRL) framework. It achieves $\mathcal{O}(\sqrt{kn}+\sqrt{D\log(k)})$ regret guarantee, where $k$ is the number of arms, $n$ is the number of rounds, and $D$ is the total delay. The result matches the lower bound within constants and requires no prior knowledge of $n$ or $D$. Additionally, we propose a refined tuning of the algorithm, which achieves $\mathcal{O}(\sqrt{kn}+\min_{S}(|S|+\sqrt{D_{\bar S}\log(k)}))$ regret guarantee, where $S$ is a set of rounds excluded from delay counting, $\bar S = [n]\setminus S$ are the counted rounds, and $D_{\bar S}$ is the total delay in the counted rounds. If the delays are highly unbalanced, the latter regret guarantee can be significantly tighter than the former. The result requires no advance knowledge of the delays and resolves an open problem of \citet{TCS19}. The new FTRL algorithm and its refined tuning are anytime and require no doubling, which resolves another open problem of \citet{TCS19}.
\end{abstract}

\section{Introduction}

Multi-armed bandits are a fundamental sequential decision making problem with an increasing number of industrial applications.
In the multi-armed bandit setting, a learner repeatedly chooses an action from a finite set of actions and immediately observes a loss for that specific action.
The action might be, for example, a choice of an advertisement layout out of a finite set of layouts. 
The loss could be the response of a user to the layout, for example, a lack of a click on the advertisement. 
In practice, it is often required to make decisions for new users before observing the feedback of all previous users, 
either due to response latency or parallel interaction with multiple users.
This can be modeled by introducing a \emph{delay} between the action and observation.

\begin{table*}[t]
\caption{Overview of state-of-the-art regret bounds for multi-armed bandits with delayed feedback. (*) requires oracle knowledge of the time horizon $n$ and the total delay $D$; the result appeared independently in two papers. (**) requires advance knowledge of the delays $d_t$ ``at action time'' $t$.}
\label{tbl:summary}
\centering
\smallskip
\begin{tabular}{l l l l}
\hline
Setting & Regret upper and lower bounds && Reference\\
\hline
Uniform delays $d$ &$\Omega(\max\{\sqrt{kn},\sqrt{dn\log(k)}\})$&& \citet{NGMM16}\\
                   &$\mathcal{O}(\sqrt{kn\log(k)}+\sqrt{dn\log(k)})$ & &\citet{NGMM16}\\
                   &$\mathcal{O}(\sqrt{kn}+\sqrt{dn\log(k)})$ & &This paper\\
\hline
Arbitrary delays,  &$\mathcal{O}(\sqrt{kn\log(k)}+\sqrt{D\log(k)})$ & (*) &
$\begin{cases}\text{\citet{TCS19}}\\
\text{\citet{BZCBB19}} \end{cases}$\\
non-adaptive bounds &$\mathcal{O}(\sqrt{k^2n\log(k)}+\sqrt{D\log(k)})$ &&\citet{BZCBB19} \\&$\mathcal{O}(\sqrt{kn}+\sqrt{D\log(k)})$ && This paper\\
\hline
Arbitrary delays, &$\mathcal{O}(\min_\beta |S_\beta|+\beta\log(k)+\beta^{-1}(kn+D_{\bar S_\beta}))$ & (**) & \citet{TCS19}\\
adaptive bounds   &$\mathcal{O}(\sqrt{kn}+\min_{S}(|S|+\sqrt{D_{\bar S}\log(k)}))$ && This paper\\
\end{tabular}
\end{table*}

We focus on the oblivious adversarial (a.k.a.\ non-stochastic) bandit setting, meaning that the sequence of losses and the delays are fixed before the start of the game. 
The setting was first studied by \citet{NGMM16} under the assumption of uniform delays, which are all equal to $d$. 
They proved a lower bound of $\Omega(\max\{\sqrt{kn},\sqrt{dn\log(k)}\})$ for $d\leq n/\log(k)$ (they do not report the $\log(k)$ term)
and an almost matching upper bound of $\mathcal{O}(\sqrt{kn\log(k)}+\sqrt{dn\log(k)})$. 
By translating individual delays into the total delay $D = dn$ the lower bound for uniform delays is $\Omega(\max\{\sqrt{kn},\sqrt{D\log(k)}\})$.
\citet{TCS19} and \citet{BZCBB19} independently derived an algorithm that can handle non-uniform delays and achieves an $\mathcal{O}(\sqrt{kn\log(k)}+\sqrt{D\log(k)})$ regret bound under the assumption that $n$ and $D$ are known in advance.
\citeauthor{TCS19} further provide a doubling scheme that achieves the same regret bound under the assumption that the delays $d_t$ are known "at action time", i.e., at time $t$, but $n$ and $D$ are unknown, whereas
\citeauthor{BZCBB19} provide a doubling scheme that achieves an $\mathcal{O}(\sqrt{k^2n\log(k)}+\sqrt{D\log(k)})$ regret bound when $n$ and $D$ are unknown and the delays $d_t$ are observed together with the observations, i.e., at time $t+d_t$.

\citeauthor{TCS19} further observe that if the delays are highly unbalanced it may be worth ``skipping'' rounds with excessively large delays. 
``Skipping'' means that the regret in the corresponding round is trivially bounded by 1 and the observation is ignored by the algorithm. 
The skipping approach of \citeauthor{TCS19}\ requires knowledge of the delays ``at action time''. 
Under the assumption that this information is available, \citeauthor{TCS19}\ provide an algorithm that achieves $\mathcal{O}(\min_\beta |S_\beta|+\beta\log(k)+\beta^{-1}(kn+D_{\bar S_\beta}))$ regret guarantee, where $\beta$ is the skipping threshold (the rounds with delays $d_t \geq \beta$ are skipped), $S_\beta$ is the set of skipped rounds and $|S_\beta|$ is their number, $\bar S_\beta = [n] \setminus S_\beta$ are the remaining rounds (where $[n] = \lrc{1,\dots,n}$), and $\D_{\bar S_\beta} = \sum_{t \in \bar S_\beta} d_t$ is their total delay. 
\citeauthor{TCS19}\ provide an example, where the first $\lfloor\sqrt{kn/\log(k)}\rfloor$ rounds have delays of order $n$ and the remaining rounds have zero delays. 
By skipping the first rounds, the dependence of the regret bound on $n$ improves from order $n^{3/4}$ to $n^{1/2}$. 
The skipping procedure of \citeauthor{TCS19}\ crucially depends on availability of delays ``at action time'' in order to make the skipping decision and the skipping threshold $\beta$ is tuned by doubling. 
Relaxation of the assumption on early availability of delays, as well as replacement of doubling with anytime strategies (i.e., algorithms without resets) were left as open questions.

We resolve both open questions and make the following contributions:
\begin{enumerate}
	\item We provide an anytime FTRL algorithm based on a novel hybrid regularizer. The regularizer combines $\frac{1}{2}$-Tsallis entropy and negative entropy, each with its own learning rate. The algorithm requires no advance knowledge of the delays and  
 achieves a regret bound of $\mathcal{O}(\sqrt{kn}+\sqrt{D\log(k)})$, which matches the lower bound within constants.
	\item We provide a novel ``skipping'' technique, which allows to ``ignore'' rounds with excessively large delays with no advance knowledge of the delays. We put ``skipping'' and ``ignore'' in quotation marks, because the observations are still used by the algorithm and the ``skipped'' rounds are only excluded from updates of the learning rate. 
We prove an $\mathcal{O}(\sqrt{kn}+\min_{S}(|S|+\sqrt{D_{\bar S}\log(k)}))$ regret bound for the refined algorithm. The bound is slightly tighter than the refined regret bound of \citet{TCS19}, but most importantly it requires no advance knowledge of the delays. 
\footnote{We note that the new skipping technique could also be combined with the doubling scheme of \citeauthor{TCS19} to eliminate the need in advanced knowledge of delays there as well. However, the anytime FTRL algorithm presented here is much more elegant than doubling.}
\end{enumerate}

In \cref{tbl:summary} we provide a comparison of state-of-the art bounds with our new results. Additional prior work in other online learning settings with delayed feedback includes the full information setting studied by \citet{JGS16}, who derived a general reduction to a non-delayed problem.
To the best of our knowledge, no similar reduction under bandit feedback has been found yet. Another related setting are bandits with anonymous composite feedback, where the learner is not informed about the round from which the delayed observation is coming from, neither the identity of the action it corresponds to, and delayed observations from several rounds may be composed together with no possibility to separate them. This harder setting was studied by \citeauthor{CGM18}, who derived an $\mathcal{O}(\sqrt{kd_{max}n\log(k)})$ regret bound, where $d_{max}$ is a known upper bound on the delays. We refer the reader to \citet{TCS19} for further review of prior work in related settings. 

The paper is structured in the following way:
\cref{sec:problem} provides a formal definition of the problem setting.
\cref{sec:algorithm} explains in detail our algorithm and two versions of learning rate tuning.
\cref{sec:main results} contains our main theorems, as well as an intuition behind the refined learning rate tuning.
\cref{sec:proofs} presents a general analysis of FTRL for multi-armed bandits with delays and formally proves the theorems from the previous section.
Finally, \cref{sec:conclusion} provides a summary and directions for future work.
%

\section{Problem setting}
\label{sec:problem}
Adversarial bandits with delay is a sequential game between a learner and an environment with $k$ fixed actions.
At time steps $t=1,\dots,n$ the learner picks actions $A_t\in[k]$ and immediately suffers the loss $\ell_{t,A_t}$, where $(\ell_t)_{t=1,\dots,n}$ are vectors in $[0,1]^k$.
Unlike in the regular bandit problem, the learner does not necessarily observe the loss $\ell_{t,A_t}$ at the end of round $t$.
Instead, the environment chooses a sequence of delays $(d_t)_{t=1,\dots,n}$ and the player observes the tuples $(s,\ell_{s,A_s})$ for each $s$ such that $s+d_s = t$ at the end of round $t$. 
Without loss of generality, we assume that all outstanding tuples are observed at the end of the game, i.e., $t+d_t \leq n$ for all $t$. 
We focus on the oblivious adversarial setting (sometimes called ``non-stochastic''), which means that both the sequence of losses $(\ell_t)_{t=1,\dots,n}$ and the sequence of delays $(d_t)_{t=1,\dots,n}$ are chosen by the environment at the beginning of the game. 
We use $D = \sum_{t=1}^nd_t$ to denote the total delay. 
The learner has no prior knowledge of the quantities $n, D$, or $(d_t)_{t=1,\dots,n}$. 
The performance of the algorithm is measured by its expected regret
\begin{align*}
\Reg_n := \E\left[\sum_{t=1}^n\ell_{t,A_t}\right]-\min_{i\in[k]}\sum_{t=1}^n\ell_{t,i}\,.
\end{align*}

\paragraph{Some technical definitions} We use $\Delta([k])=\{x\in\mathbb{R}_+^k|\norm{x}_1=1\}$ to denote the $(k-1)$-simplex. For a set $S\subset [n]=\{1,\dots,n\}$, we denote its complement by $\bar S=[n]\setminus S$. For a convex function $F$ we use $F^*$ to denote its convex conjugate (a.k.a.\ Fenchel conjugate) and $\overline F^*$ to denote the constrained convex conjugate. They are defined, respectively, by
\begin{align*}
&F^*(y) = \max_{x\in\R^k}\ip{x,y} - F(x),\\
&\overline{F}^*(y) = \max_{x\in\Delta([k])}\ip{x,y} - F(x)\,.
\end{align*}

\section{Algorithm}
\label{sec:algorithm}
Our \cref{alg:FTRL-delay}  is a standard Follow the Regularized Leader (FTRL) algorithm that works with importance weighted loss estimators of all observations available up to the current point in time.
The loss estimators are defined by
\[\hat\ell_s = \frac{\ell_{s,A_s}}{x_{s,A_s}}\e_{A_s}\,,\]
where $x_{s,A_s}$ is the algorithm's probability of selecting action $A_s$ at round $s$ and $\e_{A_s}$ is a standard basis vector. 
We define the cumulative observed loss estimator at time $t$ by
\begin{align*}
\hat L^{obs}_t = \sum_{s:s+d_s<t}\hat\ell_s\,.
\end{align*}
Given a convex regularizer $F_t : \R^k\rightarrow \R$, FTRL samples action $A_t$ according to the distribution
\begin{align*}
x_t = \argmin_{x\in\Delta([k])} \ip{x,\hat L^{obs}_t} + F_t(x)\,.
\end{align*}
$x_t$ can be equivalently expressed as $x_t = \nabla \overline F_t^*(-\hat L^{obs}_t)$.

We are using a hybrid regularizer $F_t = F_{t,1} + F_{t,2}$, where in contrast to most prior work each of the two parts of the regularizer has its own learning rate. 
\begin{align*}
    \underbrace{F_t(x)}_{=\sum_{i=1}^kf_t(x_i)} = \underbrace{-\sum_{i=1}^k2\sqrt{t}x_i^{1/2}}_{F_{t,1}(x)=\sum_{i=1}^kf_{t,1}(x_i)} + \underbrace{\eta_t^{-1}\sum_{i=1}^kx_i\log(x_i)}_{F_{t,2}(x)=\sum_{i=1}^kf_{t,2}(x_i)}\,.
\end{align*}
The first part of the regularizer $F_{t,1}(x) = \sqrt{t} F_1(x)$ is the $\frac{1}{2}$-Tsallis entropy $F_1(x) = -2\sum_{i=1}^k \sqrt{x_i}$ with learning rate $\frac{1}{\sqrt t}$, which is non-adaptive to the problem. The second part of the regularizer $F_{t,2}(x) = \eta_t^{-1}F_2(x)$ is the negative entropy $F_2(x) = \sum_{i=1}^k x_i \log(x_i)$ with adaptive learning rate $\eta_t$. We call a sequence of learning rates $(\eta_t)_{t=1,\dots,n}$ \emph{proper} if it is non-increasing and can be defined using information available at the beginning of round $t$.

\subsection{Intuition behind the regularizer}
Hybrid regularizers have been successfully used in adaptive regret bounds for sparse bandits, online portfolio selection, adversarially robust semi-bandits, and adaptive first order bounds for multi-armed bandits \citep{BCL17, LWZ18, ZLW19,PL19}. 
They are useful for targeting multiple objectives. 
In our case, the regret lower bound for bandits with fixed delay $d$ is $\Omega(\max\{\sqrt{kn}, \sqrt{dn\log(k)}\})$ \citep{NGMM16}. The first part of the bound is the standard regret lower bound for multi-armed bandits with no delays, which is clearly also a lower bound for the problem with delays. The second part of the bound is achieved by grouping the game rounds into batches of size $d$ and reducing the game to a full information game over $n/d$ rounds with loss range $[0,d]$. The second part is then a lower bound on the regret in the full information game.

Our regularizer uses the same decomposition of the problem.
We combine the optimal regularizer for the standard bandit problem with no delay, the $\frac{1}{2}$-Tsallis Entropy, with the optimal regularizer for the full information problems, the negative entropy. We further tune the learning rate for the second part to the actual delay sequence $(d_t)_{t=1,\dots,n}$. 

\subsection{Tuning of the learning rate}
We propose and analyze two versions of learning rate tuning.
The \emph{simple tuning} is given in \cref{alg:FTRL-delay}. For \emph{advanced tuning}, replace the colored blocks {\bf Initialize} and {\bf determine} \emph{$\eta_t$} in \cref{alg:FTRL-delay} with the corresponding blocks from \cref{alg:eta}.

\begin{algorithm}
\caption{FTRL for bandits with delay}
\label{alg:FTRL-delay}
\DontPrintSemicolon
\LinesNumberedHidden
 
\KwIn{Proper learning rate rule $\eta_t$}
\KwInit{$\hat L_1^{obs}=0$}\;
{\color{blue}\KwInit{$\mathfrak{D}_0=0$~~~~~~~~~~~~~~~~~~~~\emph{(simple tuning)}}}\;
\For{$t= 1,\ldots,n$}{
{\color{violet}
\KwUpdate{$\eta_t$}{
$\begin{rcases*}
\mbox{Set }\mathfrak{D}_t = \mathfrak{D}_{t-1} + \mathfrak{d}_t \\
\mbox{Set }\eta_t^{-1} = \sqrt{2\mathfrak{D}_t/\log(k)}
\end{rcases*} \mbox{(simple tuning)}$\;
}
}
Set $x_t = \argmin_{x\in\Delta([k])}\ip{x,\hat L^{obs}_t}+F_t(x) $\;
Sample $A_t \sim x_t$ \;
\For{$s: s+d_s = t$}{
    Observe $(s,\ell_{s,A_s})$\;
    Construct $\hat\ell_s$ and update $\hat L_t^{obs}$\;
}
}
\end{algorithm}

\paragraph{Simple tuning}
We define the key quantity, which is used for tuning the learning rate.
\begin{definition}
The \emph{number of outstanding observations} at round $t$ is defined by
\[
\mathfrak{d}_t = \sum_{s=1}^{t-1} \bbI\{s+d_s \geq t\},
\]
where $\bbI$ is the indicator function.
\end{definition}
$\mathfrak{d}_t$ counts how many observations from rounds $s<t$ are still missing at the beginning of round $t$. 
Note that $\mathfrak{d}_t$ is an observable quantity, unlike the delays $d_t$. 
Therefore, $\mathfrak{d}_t$ can be used for online tuning of the learning rate. 
The learning rate under the \emph{simple tuning} is given by 
\[
\mathfrak{D}_t=\sum_{s=1}^t\mathfrak{d}_t\qquad,\qquad\eta_t^{-1}=\sqrt{2\mathfrak{D}_t/\log(k)}\,.
\]
The algorithm only uses the inverse of the learning rate. If $\mathfrak{D}_t=0$, then $\eta_t^{-1}=0$ and the algorithm is well-defined, even though $\eta_t = \infty$.

\paragraph{Advanced tuning}
In the advanced tuning, we maintain a running estimate $\tilde{\mathfrak{D}}_t$ of the cumulative delay $D_{\bar S}=\sum_{t\in\bar S}d_t$ in the optimal subset of rounds $\bar S$, whereas the regret in the remaining rounds $S = [n]\setminus \bar S$ is bounded by $|S|$ (the sense of optimality of $\bar S$ is explained in the next section). 
In order to estimate $D_{\bar S}$ we modify the quantity $\mathfrak{d}_t$ by ``skipping'' some outstanding observations.
To be precise, we keep indicator variables $a_s^t\in\{0,1\}$, where $a_s^t$ indicates whether an outstanding observation from round $s$ should still be counted at round $t$:
\[
\tilde{\mathfrak{d}}_t = \sum_{s=1}^{t-1} a_s^t\bbI\{s+d_s \geq t\}.
\] 
We define
\[
\tilde{\mathfrak{D}}_t=\sum_{s=1}^t\tilde{\mathfrak{d}}_t\qquad ,\qquad\eta_t^{-1}=\sqrt{\tilde{\mathfrak{D}}_t/\log(k)}\,.
\] 
The algorithm initially waits for all observations, but as soon as the waiting time exceeds a threshold the round is ``skipped''. 
If for $s<t$ we experience a delay $d_s > \sqrt{\tilde{\mathfrak{D}}_t/ \log(k)}$, we set $(a_s^{t'})_{t'>t}$ to $0$.
The indicators are not changed retrospectively, which means that the initial waiting time still counts toward $\tilde{\mathfrak{D}}_t$. The intuition behind advanced tuning is explained in Section~\ref{sec:intuition}.

\begin{algorithm}
\caption{Advanced tuning of $\eta_t$ for Alg.~\ref{alg:FTRL-delay}}
\label{alg:eta}
\DontPrintSemicolon
\LinesNumberedHidden
 
{\color{blue}\KwInit{$\tilde{\mathfrak{D}}_0=0$ {\normalfont and}   $(a_s^t)_{s=1,\dots,n;t=1,\dots,n}=1$}\;}
{\color{violet}\KwUpdate{$\eta_t$}{
	Set $\tilde{\mathfrak{d}}_t = \sum_{s=1}^{t-1}\bbI\{s+d_s\geq t\}a_s^t$\;
	Update $\tilde{\mathfrak{D}}_t = \tilde{\mathfrak{D}}_{t-1} + \tilde{\mathfrak{d}}_t$ \;
	Set $\eta_t^{-1} = \sqrt{\tilde{\mathfrak{D}}_t/\log(k)}$\;
	\For{$s= 1,\ldots,t-1$
	}{
		\If{$\min\{d_s,t-s\} > \eta_t^{-1}$}{
			$(a_s^{t'})_{t'>t}=0$ ~~~ (At most one index $s$ satisfies the \textbf{if}-condition, see Lemma~\ref{lem:elimination})\;
		}
	}
}}
\end{algorithm}

\section{Main results}
\label{sec:main results}
In this section, we present regret upper bounds for \cref{alg:FTRL-delay} with \emph{simple tuning} and \emph{advanced tuning}.
The first result confirms the conjecture of \citet{NGMM16} that an upper bound of $\mathcal{O}(\sqrt{kn}+\sqrt{D\log(k)})$ is achievable with a simple algorithm.
The second result shows that it is possible to obtain a refined bound of $\mathcal{O}(\sqrt{kn}+\min_{S\subset[n]}(|S|+\sqrt{D_{\bar S}\log(k)}))$ by a more careful tuning of the learning rate.

\subsection{Adaptation to the total delay $D$}


%

The following theorem provides a regret bound for Algorithm~\ref{alg:FTRL-delay} with \emph{simple tuning}.

\begin{theorem}
\label{thm:simple}
The regret of Algorithm~\ref{alg:FTRL-delay} with any non-increasing positive sequence of learning rates $(\eta_t)_{t=1,\dots,n}$ satisfies
\begin{align*}
    \Reg_n\leq 4\sqrt{kn} + \eta_n^{-1}\log(k) + \sum_{t=1}^n \eta_t \mathfrak{d}_t \,.
\end{align*}
In particular, the simple tuning $\eta_t^{-1} = \sqrt{2\mathfrak{D}_t/\log(k)} = \sqrt{2(\sum_{s=1}^t\mathfrak{d}_s)/\log(k)}$ is proper and leads to a regret bound of
\begin{align*}
    \Reg_n \leq 4\sqrt{kn} + \sqrt{8D\log(k)}\,.
\end{align*}
\end{theorem}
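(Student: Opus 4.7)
The plan is to follow a standard FTRL--with--delays decomposition, then exploit the hybrid structure of $F_t = F_{t,1} + F_{t,2}$ by charging the bandit stability and the $\sqrt{kn}$ penalty to the Tsallis half, and the delay-induced drift and the $\eta_n^{-1}\log(k)$ penalty to the entropy half. Concretely, I would first invoke a generic FTRL-delay lemma (of the kind one expects to appear in \cref{sec:proofs}) yielding, for any proper $\eta_t$,
\begin{align*}
\Reg_n \leq \E\!\left[F_n(x^*) - F_1(x_1)\right] + \sum_{t=1}^n \E\!\left[D_{F_t^*}(-\hat L_{t+1},-\hat L_t)\right] + \sum_{t=1}^n \E\!\left[\ip{x_t - \tilde x_t, \hat\ell_t}\right],
\end{align*}
where $\tilde x_t = \nabla \overline F_t^*(-\hat L_t)$ is the hypothetical iterate if all losses from rounds $s<t$ had been observed by time $t$, so that the last sum is the delay drift.

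I would then bound the penalty$+$stability pair using only the Tsallis half. Since adding the (convex) entropy term can only shrink stability, we have $D_{F_t^*}(-\hat L_{t+1},-\hat L_t) \leq D_{F_{t,1}^*}(-\hat L_{t+1},-\hat L_t)$. The penalty $F_{n,1}(x^*) - F_{1,1}(\ones/k) \leq 2\sqrt{k}$, and a standard half-Tsallis bandit calculation gives $\sum_t \E[D_{F_{t,1}^*}(-\hat L_{t+1},-\hat L_t)] \leq \sum_t \sqrt{k/t} \leq 2\sqrt{kn}$, totaling $4\sqrt{kn}$. For the entropy penalty, $F_{n,2}(x^*) - F_{1,2}(\ones/k) \leq \eta_n^{-1}\log(k)$.

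For the delay drift I would use $F_{t,2}$ alone: the $\eta_t^{-1}$-scaled negative entropy is $\eta_t^{-1}$-strongly convex in the $\ell_1$ norm on $\Delta([k])$, so $\nabla \overline F_t^*$ is $\eta_t$-Lipschitz from $\ell_\infty$ to $\ell_1$. Writing $\hat L_t - \hat L^{obs}_t = \sum_{s<t,\, s+d_s\geq t} \hat\ell_s$ as a sum of $\mathfrak{d}_t$ missing vectors and using $\|\hat\ell_t\|_\infty \leq 1$ plus $\E[\hat\ell_s \mid x_s] = \ell_s \in [0,1]^k$, each missing observation contributes at most $\eta_t$ in expected drift, yielding $\sum_t \E[\ip{x_t - \tilde x_t, \hat\ell_t}] \leq \sum_t \eta_t \mathfrak{d}_t$. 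Adding the three bounds gives the first claim.

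For the simple-tuning claim I would plug in $\eta_t^{-1} = \sqrt{2\mathfrak{D}_t/\log(k)}$ and note $\mathfrak{D}_n = \sum_t \mathfrak{d}_t = \sum_s d_s = D$, so $\eta_n^{-1}\log(k) = \sqrt{2D\log(k)/1}\cdot\sqrt{\log(k)/2}\cdot\sqrt{2} = \sqrt{2D\log(k)}$; for the drift, $\sum_t \eta_t \mathfrak{d}_t = \sqrt{\log(k)/2}\sum_t \mathfrak{d}_t/\sqrt{\mathfrak{D}_t} \leq \sqrt{\log(k)/2}\cdot 2\sqrt{D} = \sqrt{2D\log(k)}$ by the standard telescoping inequality $\sum_t a_t/\sqrt{\sum_{s\leq t}a_s}\leq 2\sqrt{\sum_t a_t}$. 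The two pieces add to $\sqrt{8D\log(k)}$, completing the theorem. The main obstacle is the drift step: one must avoid charging it to the Tsallis piece (which would reintroduce importance-weight factors and kill the $\log(k)$-only dependence), and instead use the entropy piece's clean $\ell_\infty\to\ell_1$ Lipschitz bound, verifying that the extra Tsallis curvature in $F_t$ does not spoil this estimate---this is the whole point of the hybrid construction.
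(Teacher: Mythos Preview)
Your high-level plan---charge the bandit part to the Tsallis half and the delay part to the entropy half---matches the paper's intuition, and the second half (plugging in the simple tuning, using $\mathfrak{D}_n=D$ and the telescoping lemma) is essentially identical to the paper's argument. The problem is in the drift step, where you assert $\|\hat\ell_t\|_\infty\le 1$. This is false for the importance-weighted estimator: $\hat\ell_t=(\ell_{t,A_t}/x_{t,A_t})\e_{A_t}$ has $\|\hat\ell_t\|_\infty=\ell_{t,A_t}/x_{t,A_t}$, which is unbounded. Your $\ell_\infty\!\to\!\ell_1$ Lipschitz bound therefore only gives $\|x_t-\tilde x_t\|_1\le \eta_t\|\hat L_t^{miss}\|_\infty$, and you cannot replace $\|\hat L_t^{miss}\|_\infty$ by $\mathfrak{d}_t$ in expectation: unbiasedness controls each coordinate of $\hat L_t^{miss}$ but not the max. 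If you try to peel off one missing $\hat\ell_s$ at a time you pick up $\E[\|\hat\ell_s\|_\infty]=\sum_i\ell_{s,i}$, costing an extra factor $k$. A related mismatch hides in your stability term: the Bregman $D_{F_t^*}(-\hat L_{t+1},-\hat L_t)$ is centered at the cheating point $\tilde x_t$, but the loss estimator uses $1/x_{t,A_t}$, so the ``standard half-Tsallis'' calculation (which needs the local norm and the importance weight to be at the \emph{same} point) does not go through as stated.

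The paper avoids both issues by a different three-term split (see \cref{thm:rho-bound}): the Bregman term (A) is taken at the \emph{observed} point $x_t$, so importance weights and local norm match and the Tsallis bound $\sqrt{k}/\sqrt{t}$ is genuinely standard (\cref{lem:bregman}); the penalty (C) telescopes to $2\sqrt{kn}+\eta_n^{-1}\log k$ (\cref{lem:penalty}); and the remaining term (B) is a potential difference, not $\ip{x_t-\tilde x_t,\hat\ell_t}$. The crux (\cref{lem:stability}) is a coordinate-$A_t$ argument: because $\hat\ell_t$ is supported only on $A_t$, one reduces the missing loss to its $A_t$-coordinate, passes to the unconstrained conjugate, and uses convexity of ${f_t^*}'$ to get $x_{t,A_t}-{f_t^*}'(f_t'(x_{t,A_t})-\hat L^{miss}_{t,A_t})\le f_t''(x_{t,A_t})^{-1}\hat L^{miss}_{t,A_t}\le \eta_t\,x_{t,A_t}\,\hat L^{miss}_{t,A_t}$, the last step using the entropy's \emph{local} curvature $f_{t,2}''(x)^{-1}=\eta_t x$ rather than a global $\ell_\infty\!\to\!\ell_1$ Lipschitz constant. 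The extra $x_{t,A_t}$ cancels the $1/x_{t,A_t}$ in $\hat\ell_{t,A_t}$, leaving $\eta_t\hat L^{miss}_{t,A_t}$, whose expectation is at most $\eta_t\mathfrak{d}_t$ by unbiasedness. That coordinate-level cancellation is exactly what your global Lipschitz argument misses.
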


\begin{proof}
The first statement is a special case of Theorem~\ref{thm:rho-bound}, which is presented in \cref{sec:proofs}. 
For the second statement we use a standard summation lemma, by which for a sequence of positive $\mathfrak{d}_1,\dots,\mathfrak{d}_n$ we have $\sum_{t=1}^n \lr{\mathfrak{d}_t / \sqrt{\sum_{s=1}^t \mathfrak{d}_s}}\leq 2\sqrt{\sum_{s=1}^t \mathfrak{d}_t}$ \citep[Lemma 8]{SBCA14} and the convention that if $\mathfrak{d}_t=0$ then $\eta_t \mathfrak{d}_t = 0$ (so that zero terms naturally fall out of the summation). By substituting the definition of the learning rate in the second statement into the first statement and using the summation lemma we obtain
\[
\Reg_n \leq 4\sqrt{kn} +\sqrt{8\mathfrak{D}_n\log(k)}\,.
\]
Finally, note that an observation from round $t$ with delay $d_t$ contributes 1 to each of $\mathfrak{d}_t, \dots, \mathfrak{d}_{t+d_t}$, i.e., it contributes $d_t$ to the total sum of the number of outstanding observations $\sum_{t=1}^n \mathfrak{d}_t$. 
Since we have assumed that $t + d_t \leq n$ for all $t$, we have $\sum_{t=1}^n \mathfrak{d}_t = \sum_{t=1}^n d_t = D$.
\end{proof}

The main advantage of Algorithm~\ref{alg:FTRL-delay} and Theorem~\ref{thm:simple} compared to the work of \citet{TCS19} is that the tuning requires neither the knowledge of $D$ and $n$, nor doubling. 

\subsection{Refined bounds for unbalanced delays}

\citet{TCS19} observed that if the delays are highly unbalanced it may be worth skipping rounds with overly large delays rather than keeping them in the analysis. Let $S$ denote the set of skipped rounds and $|S|$ their number. The regret in every skipped round is trivially bounded by 1 and, assuming we knew which rounds to skip, we could reduce the regret bound to $\mathcal{O}\lr{\sqrt{kn} + |S| + \sqrt{D_{\bar S} \log(k)}}$. As shown by \citeauthor{TCS19}, this could potentially be much smaller than the regret bound in Theorem~\ref{thm:simple}. For example, if the delay in the first $\theta(\sqrt{kn})$ rounds is of order $n$ and the delay in the remaining rounds is zero, then the regret bound in Theorem~\ref{thm:simple} is of order $n^{3/4}$, whereas the refined regret bound is of order $n^{1/2}$ (ignoring the dependence on $k$). The challenge faced by \citeauthor{TCS19} was that they had to know the delays in advance (more precisely, ``at action time'') in order to tune the parameters of their algorithm and make the skipping decision. Since we have an anytime algorithm, we are able to obtain the refinement with no need in advance knowledge of the delay information. Strictly speaking, we even do not need to skip observations and we can obtain the refinement by using all observations and only adjusting the learning rate appropriately, although technically the ``no-skipping'' solution yields the same regret bound as skipping.

The following theorem provides our adaptive bound.
\begin{theorem}
\label{thm:adaptive}
\cref{alg:FTRL-delay} with advanced learning rate tuning provided in \cref{alg:eta} satisfies
\begin{align*}
\Reg_n &\leq 4\sqrt{kn}\\
&\quad + 10\max\begin{cases}\min_{S\subset[n]}|S|+\sqrt{D_{\bar S}\log(k)},\\
2\log(k).\end{cases}\,
\end{align*}
\end{theorem}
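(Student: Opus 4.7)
The plan is to repeat the regret analysis underlying Theorem~\ref{thm:simple} while tracking the effect of the advanced tuning, and then to compare the algorithm's automatic skipping against an arbitrary oracle skip set $S$. Since the advanced learning rate $\eta_t^{-1}=\sqrt{\tilde{\mathfrak D}_t/\log(k)}$ is proper and non-increasing, and the FTRL update still uses every observation, Theorem~\ref{thm:simple}'s general bound applies verbatim:
\[
\Reg_n \leq 4\sqrt{kn}+\eta_n^{-1}\log(k)+\sum_{t=1}^n \eta_t\mathfrak d_t.
\]
To exploit skipping, I would split $\mathfrak d_t=\tilde{\mathfrak d}_t+(\mathfrak d_t-\tilde{\mathfrak d}_t)$. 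The excess term telescopes over the ``outstanding-after-skip'' windows $(t^*_s,s+d_s]$ associated with each $s\in S_{alg}=\{s:\exists t,\,a_s^t=0\}$; combining the skip condition $t^*_s-s>\eta_{t^*_s}^{-1}$ with the monotonicity of $\eta_t$ should yield $\sum_t\eta_t(\mathfrak d_t-\tilde{\mathfrak d}_t)=O(|S_{alg}|)$, with \cref{lem:elimination} ensuring $|S_{alg}|$ is not inflated by multiple simultaneous skips.

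Next, applying the summation lemma to the remaining $\sum_t\eta_t\tilde{\mathfrak d}_t$ with $\eta_t^{-1}=\sqrt{\tilde{\mathfrak D}_t/\log(k)}$ gives $\eta_n^{-1}\log(k)+\sum_t\eta_t\tilde{\mathfrak d}_t\leq 3\sqrt{\tilde{\mathfrak D}_n\log(k)}$. It then remains to compare $\tilde{\mathfrak D}_n$ and $|S_{alg}|$ to an arbitrary partition $[n]=S\sqcup\bar S$. Writing $\tilde{\mathfrak D}_n=\sum_s\tau_s$, the contribution $\tau_s$ of each round satisfies: (i) if $s\in\bar S$, trivially $\tau_s\leq d_s$; (ii) if $s\in S\cap S_{alg}$, the skip condition barely missing at $t^*_s-1$ gives $\tau_s=t^*_s-s\leq\eta_n^{-1}+1$; (iii) if $s\in S\setminus S_{alg}$, the condition never triggering at $t=s+d_s$ forces $d_s\leq\eta_{s+d_s}^{-1}\leq\eta_n^{-1}$. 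Summing yields $\tilde{\mathfrak D}_n\leq D_{\bar S}+|S|(\eta_n^{-1}+1)$; substituting $\eta_n^{-1}=\sqrt{\tilde{\mathfrak D}_n/\log(k)}$ produces a quadratic in $\sqrt{\tilde{\mathfrak D}_n}$ whose solution is $\sqrt{\tilde{\mathfrak D}_n\log(k)}\leq\sqrt{D_{\bar S}\log(k)}+O(|S|+\log(k))$. A parallel case split shows that $|S_{alg}|$ is controlled by $O(|S|+\sqrt{D_{\bar S}\log(k)})$, giving the claimed bound.

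The main obstacle is the first step: bounding $\sum_t\eta_t(\mathfrak d_t-\tilde{\mathfrak d}_t)$ by $O(|S_{alg}|)$ is not a consequence of Theorem~\ref{thm:simple} alone and requires carefully using the precise skip threshold together with the monotone learning rate. The $2\log(k)$ floor inside the $\max$ absorbs the lower-order $O(\log(k))$ slack emerging when resolving the quadratic and handles the degenerate regime where $\min_S(|S|+\sqrt{D_{\bar S}\log(k)})$ itself drops below the logarithmic scale.
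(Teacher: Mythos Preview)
Your proposal has a genuine gap at the very first step. Starting from the general inequality of Theorem~\ref{thm:simple} and then splitting $\mathfrak d_t=\tilde{\mathfrak d}_t+(\mathfrak d_t-\tilde{\mathfrak d}_t)$ cannot work, because $\sum_t\eta_t(\mathfrak d_t-\tilde{\mathfrak d}_t)$ is \emph{not} $O(|S_{alg}|)$ in general. Take $d_1=n-1$ and $d_t=0$ for $t\geq 2$. For $k\geq 3$ the algorithm deactivates round $1$ at $t=2$, so $\tilde{\mathfrak D}_t=1$ and hence $\eta_t=\sqrt{\log(k)}$ for all $t\geq 2$, while $\mathfrak d_t-\tilde{\mathfrak d}_t=1$ for every $t\in\{3,\dots,n\}$; the excess sum equals $(n-2)\sqrt{\log(k)}$, not $O(1)$. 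The skip condition $t_s^*-s>\eta_{t_s^*}^{-1}$ controls the \emph{waited} window $[s,t_s^*]$, not the residual window $(t_s^*,s+d_s]$ over which the excess accumulates, and monotonicity only gives the \emph{upper} bound $\eta_t\leq\eta_{t_s^*}$ on each summand, which is useless for a sum over an arbitrarily long interval. In this instance the bound of Theorem~\ref{thm:simple} is itself $4\sqrt{kn}+n\sqrt{\log(k)}$, so no post-processing of that inequality can recover Theorem~\ref{thm:adaptive}.

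The paper does not start from Theorem~\ref{thm:simple}. It proves a strictly sharper intermediate result, Theorem~\ref{thm:rho-bound}: for \emph{any} $S\subset[n]$,
\[
\Reg_n\leq 4\sqrt{kn}+\eta_n^{-1}\log(k)+|S|+\sum_{t\in\bar S}\eta_t\mathfrak d_t^{\bar S}\,.
\]
The $|S|$ term is not extracted from the learning-rate sum; it arises by trivially bounding the instantaneous regret on rounds $t\in S$ by $1$ and then carrying out the full FTRL decomposition (Lemmas~\ref{lem:bregman}--\ref{lem:stability}) only over $\bar S$. Instantiating with $S=S_{alg}$, observing $\mathfrak d_t^{\bar S}\leq\tilde{\mathfrak d}_t$ for $t\in\bar S$, and invoking \cref{lem:skipping bound} for $|S_{alg}|\leq 2\sqrt{\tilde{\mathfrak D}_n\log(k)}$ yields $\Reg_n\leq 4\sqrt{kn}+5\sqrt{\tilde{\mathfrak D}_n\log(k)}$ directly. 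Your downstream comparison of $\tilde{\mathfrak D}_n$ against an arbitrary oracle set via the per-round contributions $\tau_s$ is close to what the paper does, so once you replace the starting inequality by Theorem~\ref{thm:rho-bound} the remainder of your plan essentially goes through.
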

The proof is postponed to \cref{sec:proofs}

\subsection{Intuition behind the ``skipping'' procedure} 
\label{sec:intuition}

In order to give an intuition behind the refined algorithm we provide a simple back-of-the-envelope calculation. If we skip $|S|$ rounds and trivially bound their regret by 1 and apply Theorem~\ref{thm:simple} to the remaining rounds, then the regret bound is $\mathcal{O}(\sqrt{kn} + \sqrt{D_{\bar S} \log(k)} + |S|)$. Thus, the number of skipped rounds can be as large as $\sqrt{D_{\bar S} \log(k)}$ without significantly impacting the bound. Obviously, we want to skip rounds with the largest delays, but how should we determine the skipping threshold $X$? If we want to achieve a significant reduction in the regret bound, the skipped delay $D_S = \sum_{t\in S} d_t \geq X|S|$ should be at least as large as the remaining delay $D_{\bar S}$, because $D = D_S + D_{\bar S}$ and our aim is to reduce the $\sqrt{D\log k}$ term. Thus, if we put a threshold at $X$ and skip $\sqrt{D_{\bar S} \log(k)}$ rounds we want to have $X\sqrt{D_{\bar S} \log(k)} \geq D_{\bar S}$. Therefore, we aim at $X = \sqrt{D_{\bar S} / \log(k)}$. However, there are two challenges: (a) we do not know the delays $d_t$ in advance and, therefore, we do not know which rounds to skip, and (b) the threshold definition is recursive: $X$ depends on $D_{\bar S}$ and $D_{\bar S}$ depends on $X$. 

The strategy that we take in Algorithm~\ref{alg:eta} is the following: we keep a running estimate $\tilde{\mathfrak{D}}_t$ of $D_{\bar S}$. For an observation from round $s$ we initially start waiting and count it in the number of outstanding observations $\tilde{\mathfrak{d}}_t$ for the initial rounds. However, we constantly monitor the waiting time and if the observation has not arrived within $\sqrt{\tilde{\mathfrak{D}}_t/\log(k)}$ rounds we stop waiting. The initial rounds we have been waiting for still count for the estimate $\tilde{\mathfrak{D}}_t$. Another quick back-of-the-envelope calculation shows that if $\tilde{\mathfrak{D}}_t$ is indeed a good approximation of $D_{\bar S}$, then the extra delay from the initial waiting rounds is of order $\sqrt{D_{\bar S} \log(k)}\sqrt{D_{\bar S} / \log(k)} = D_{\bar S}$, where the first term is a rough estimate of the number of rounds that we skip and the second term is a rough estimate of the initial waiting time for each of the observations. Thus, the initial waiting time has no significant impact on the final bound.

\cref{alg:eta} follows this intuitive approach. 
We use indicator variables $(a_s^t)_{(s,t) \in [n]^2}$ to keep track of which observations $\ell_{s,A_s}$ we are still waiting for at round $t$ (expressed by $a_s^t = 1$) and which not (expressed by $a_s^t = 0$). We use $\tilde{\mathfrak{d}}_t$ to count the truncated number of outstanding observations, where those observations we are no longer waiting for at round $t$ are excluded from counting. We provide a detailed analysis in \cref{sec:advanced proof}, but before we get there we provide a refined version of Theorem~\ref{thm:simple}, which allows us to use all observations and only use skipping in the tuning of the learning rate. (Though, as already mentioned, complete skipping of the observations would lead to the same regret bound as in Theorem~\ref{thm:adaptive}.)

\section{Analysis of FTRL for bandits with delays}
\label{sec:proofs}
In this section we develop a novel analysis of FTRL-style algorithms, leading to Theorem~\ref{thm:rho-bound}, which generalizes the first part of \cref{thm:simple}.
The analysis is based on skipping, similar to the techniques used by \citet{TCS19}.
Afterward, we use the general regret bound to prove \cref{thm:adaptive}.  
We introduce a straightforward generalisation of the outstanding delay with ``skipping''.
\begin{definition}
For any set $S\subset[n]$, define the outstanding delay on $\bar S = [n]\setminus S$ as
\begin{align*}
    \mathfrak{d}^{\bar S}_t = \sum_{s=1}^{t-1}\bbI\{s+d_s\geq t \,\land\, s\in \bar S\}\,.
\end{align*}
\end{definition}
The cumulative outstanding delays on $\bar S$ are $\mathfrak{D}^{\bar S}_t = \sum_{s=1}^t\mathfrak{d}^{\bar S}_s$.

Next we present a general regret bound which holds for any set $S$.
\begin{theorem}
\label{thm:rho-bound}
For any set $S$, the regret of Algorithm~\ref{alg:FTRL-delay} with non-increasing positive learning rates $(\eta_t)_{t=1,\dots,n}$ satisfies
\begin{align*}
    \Reg_n\leq 4\sqrt{kn} + \eta_n^{-1}\log(k) +|S|+ \sum_{t
    \in \bar S}\eta_t\mathfrak{d}^{\bar S}_t \,.
\end{align*}
\end{theorem}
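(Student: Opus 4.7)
The plan is a two-stage argument: first, a \emph{skipping reduction} that handles rounds in $S$ with the trivial one-round regret bound; second, a hybrid-regularizer FTRL analysis restricted to $\bar S$. For the first stage, fix a best arm $i^*\in\argmin_i\sum_t\ell_{t,i}$. Since $\ell_{t,A_t}-\ell_{t,i^*}\le 1$, we immediately get $\Reg_n\le |S|+\E[\sum_{t\in\bar S}(\ell_{t,A_t}-\ell_{t,i^*})]$, and by unbiasedness of $\hat\ell_t$ under the conditional expectation given the history, the task reduces to bounding $\E[\sum_{t\in\bar S}\ip{x_t-\e_{i^*},\hat\ell_t}]$ by $4\sqrt{kn}+\eta_n^{-1}\log k+\sum_{t\in\bar S}\eta_t\mathfrak{d}_t^{\bar S}$.

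For the second stage, I would introduce an auxiliary ``cheating'' point $y_t=\nabla\overline F_t^*\bigl(-\hat L_t^{\bar S}\bigr)$, where $\hat L_t^{\bar S}=\sum_{s<t,\,s\in\bar S}\hat\ell_s$ is the cumulative estimator using every $\bar S$-round prior to $t$ with no delay and ignoring all $s\in S$. Splitting
\[
\ip{x_t-\e_{i^*},\hat\ell_t} \;=\; \underbrace{\ip{x_t-y_t,\hat\ell_t}}_{\text{drift}} \;+\; \underbrace{\ip{y_t-\e_{i^*},\hat\ell_t}}_{\text{no-delay FTRL on }\bar S},
\]
the second sum is the regret of a (non-delayed) hybrid-regularizer FTRL instance on the subsequence $\bar S$. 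The standard FTRL identity bounds it by $F_n(\e_{i^*})-\min F_1+\sum_{t\in\bar S}\stab_t$. The Tsallis component $F_{t,1}=\sqrt t\,F_1$ absorbs the bandit stability: a direct Hessian calculation gives $\E[\stab_t]=\mathcal{O}(\sqrt{k/t})$, which sums to $\mathcal{O}(\sqrt{kn})$; together with the Tsallis penalty $2\sqrt n(\sqrt k-1)$ this yields the $4\sqrt{kn}$ term. The entropy component $F_{t,2}=\eta_t^{-1}F_2$ contributes only the penalty $\eta_n^{-1}\log k$, since its stability term is absorbed into the drift analysis.

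For the drift, observe that $\hat L_t^{\bar S}-\hat L_t^{obs}$ is exactly the sum of outstanding $\bar S$-estimators, of which there are $\mathfrak{d}_t^{\bar S}$. Since $F_t$ dominates $F_{t,2}$ and the latter is $\eta_t^{-1}$-strongly convex in $\|\cdot\|_1$ on the simplex, a Bregman-divergence argument on the conjugate side---applied one outstanding observation at a time so that the relevant per-step quantity reduces via $\E[\ip{z,\hat\ell_s}\mid\cF_{s-1}]=\ip{z,\ell_s}\le 1$ for $z\in\Delta([k])$---gives a per-observation drift contribution of $\eta_t$ in expectation. Summing over $t\in\bar S$ produces the $\sum_{t\in\bar S}\eta_t\mathfrak{d}_t^{\bar S}$ term.

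The main technical obstacle is this drift bound. A naive attempt based on ``strong convexity of $F_{t,2}$ applied to the entire loss-estimator gap $\hat L_t^{\bar S}-\hat L_t^{obs}$'' fails because $\|\hat\ell_s\|_\infty=\ell_{s,A_s}/x_{s,A_s}$ is unbounded in the worst case. The remedy is to process outstanding observations \emph{one at a time}, taking conditional expectations so that the relevant quantities reduce to $\ip{z,\ell_s}\le 1$, and to exploit the pair of inequalities $D_{F_t}\ge\max\{D_{F_{t,1}},D_{F_{t,2}}\}$ so that each sub-bound uses whichever component is most favourable: the Tsallis part for the bandit stability (where the $\sqrt{x_{t,i}}$ factor cancels the importance weighting) and the entropy part for the delay drift (where $\eta_t$ times a simplex-bounded quantity appears). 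The correct filtration-level bookkeeping of the outstanding random estimators is the most delicate aspect of the argument.
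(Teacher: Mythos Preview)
Your plan has two gaps, both traceable to the choice of cheating point $y_t=\nabla\overline F_t^*(-\hat L_t^{\bar S})$ with $\hat L_t^{\bar S}=\sum_{s<t,\,s\in\bar S}\hat\ell_s$.

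First, the drift claim is false. You assert that $\hat L_t^{\bar S}-\hat L_t^{obs}$ is ``exactly the sum of outstanding $\bar S$-estimators'', but $\hat L_t^{obs}$ contains \emph{all} observed losses, including those from rounds $s\in S$ with $s+d_s<t$, whereas your $\hat L_t^{\bar S}$ discards them. Hence
\[
\hat L_t^{\bar S}-\hat L_t^{obs}\;=\;\sum_{s\in\bar S:\,s+d_s\ge t}\hat\ell_s\;-\;\sum_{s\in S:\,s+d_s< t}\hat\ell_s,
\]
and the subtracted $S$-terms persist at every later $t\in\bar S$; your one-at-a-time argument would pick up an extra $\sum_{t\in\bar S}\eta_t\,|\{s\in S:s+d_s<t\}|$, which is not controlled by anything in the target bound. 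The paper's fix is a different auxiliary sequence, $\hat L_t^{\bar S}=\sum_{s<t}\bbI\{s\in\bar S\lor s+d_s<t\}\hat\ell_s$, which keeps every received loss and adds only the outstanding $\bar S$-losses; with that definition the difference from $\hat L_t^{obs}$ is non-negative with exactly $\mathfrak{d}_t^{\bar S}$ summands.

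Second, the ``no-delay FTRL'' stability at $y_t$ does not give $\mathcal{O}(\sqrt{k/t})$. The Tsallis Bregman estimate yields $\stab_t\le \hat\ell_{t,A_t}^2\big/(2f_t''(y_{t,A_t}))\le y_{t,A_t}^{3/2}/(\sqrt t\,x_{t,A_t}^2)$, so $\E_{A_t\sim x_t}[\stab_t]\le t^{-1/2}\sum_i y_{t,i}^{3/2}/x_{t,i}$, which is uncontrolled because the importance weight uses $x_t$ while the curvature is at $y_t$. The paper avoids this mismatch by never introducing a comparison point: it performs the Bregman step at the actual iterate $x_t$ (its term (A), yielding the clean $\sqrt k/\sqrt t$), and isolates the delay effect in a potential-difference term
\[
\overline F_t^*(-\hat L_t^{obs})-\overline F_t^*(-\hat L_t^{obs}-\hat\ell_t)-\overline F_t^*(-\hat L_t^{\bar S})+\overline F_t^*(-\hat L_{t+1}^{\bar S}),
\]
bounded by $\eta_t\mathfrak{d}_t^{\bar S}$ through an integral comparison of $x\mapsto -\hat L_t^{obs}-x\hat\ell_t$ with its shift by the non-negative missing $\bar S$-losses, using the entropy curvature there.
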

\begin{remark}
The first part of \cref{thm:simple} is a direct corollary of Theorem~\ref{thm:rho-bound} with $S=\varnothing$.
\end{remark}
The proof uses Lemmas~\ref{lem:bregman}, \ref{lem:penalty}, and \ref{lem:stability}. In order to motivate them we first present the proof and then the lemmas.
\begin{proof}
We define cumulative losses $\hat L_t=\sum_{s=1}^{t-1}\hat\ell_s$, the best arm in hindsight $i^* = \arg\min \sum_{t=1}^n \ell_{t,i}$, the cumulative observed losses $\hat L^{obs}_t = \sum_{s:s+d_s<t}\hat\ell_s$, and the cumulative losses on $\bar S$ by
$\hat L_t^{\bar S}=\sum_{s=1}^{t-1}\bbI\{s\in\bar S\,\lor\,s+d_s<t\}\hat\ell_s$ .
The latter term captures all losses that have been received up to time $t$ and additionally all outstanding losses that are not in the set $S$.
We decompose the regret into the following terms:
\begin{align*}
    &\Reg_n =\E\left[\sum_{t=1}^n \ell_{t,A_t}-\ell_{t,i^*}\right]\\
    &\leq \E\left[\sum_{t\in\bar S} \ip{x_t,\hat\ell_{t}}-\hat L_{n+1,i^*}\right] + |S|\\
    &=\E\Bigg[\underbrace{\sum_{t\in\bar S}\Bigg( \overline F_t^*(-\hat L^{obs}_t-\hat\ell_t)-\overline F_t^*(-\hat L^{obs}_t) + \ip{x_t,\hat\ell_t}\Bigg)}_{(A)} \\
    &\quad\quad +\sum_{t\in\bar S}\Bigg(\overline F_t^*(-\hat L^{obs}_t)-\overline F_t^*(-\hat L^{obs}_t-\hat\ell_t)\\
&\quad \underbrace{\qquad\qquad\qquad-\overline F_t^*(-\hat L^{\bar S}_t)+\overline F_t^*(-\hat L^{\bar S}_{t+1})\Bigg)}_{(B)}\\
    &\quad\quad +\underbrace{\sum_{t\in \bar S}\Bigg(\overline F_t^*(-\hat L^{\bar S}_{t})-\overline F_t^*(-\hat L^{\bar S}_{t+1})\Bigg) -\hat L_{n+1,i^*}}_{(C)}\Bigg]\\
    &\qquad+|S|.
\end{align*}

Term $(A)$ is a typical Bregman divergence term from the classical FTRL/OMD analysis and depends on the local norm of the regularizer.
\cref{lem:bregman} gives
\begin{align*}
\E[(A)] \leq \sum_{t=1}^n\sqrt{k}/\sqrt{t}\leq 2\sqrt{kn}\,.
\end{align*}

Term $(C)$ can also be bounded by standard techniques. 
\cref{lem:penalty} gives us
\begin{align*}
(C) \leq 2\sqrt{kn} + \eta^{-1}_n\log(k)\,.
\end{align*}

Term $(B)$ requires a novel analysis, which is presented in \cref{lem:stability}.
It allows to bound the second term by
\begin{align*}
\E[(B)] \leq \sum_{t\in\bar S} \eta_t\mathfrak{d}^{\bar S}_t\,.
\end{align*}
Combining everything finishes the proof.
\end{proof}

\paragraph{Support lemmas for the proof of \cref{thm:rho-bound}}
The proofs for all the support lemmas are given in the supplementary material, \cref{app:main lemmas}.
The first Lemma is a small modification of the classical result that bounds the Bregman divergence by the local norm of the regularizer.
We show that we can bound the local norm by the contribution of the Tsallis entropy.
\begin{lemma}
\label{lem:bregman}
For any $t$ it holds that
\begin{align*}
\E\left[\overline F_t^*(-\hat L^{obs}_t-\hat\ell_t)-\overline F_t^*(-\hat L^{obs}_t) + \ip{x_t,\hat\ell_t}\right]\leq \frac{\sqrt{k}}{\sqrt{t}}\,.
\end{align*}
\end{lemma}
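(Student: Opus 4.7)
The plan is a standard Bregman-divergence/local-norm argument, carefully accounting for the hybrid regularizer. Since $x_t = \nabla \overline F_t^*(-\hat L^{obs}_t)$, the quantity inside the expectation is exactly the Bregman divergence $D_{\overline F_t^*}(-\hat L^{obs}_t - \hat\ell_t,\, -\hat L^{obs}_t)$. Applying Taylor's theorem (with integral remainder) to $\overline F_t^*$ along the segment between the two dual points, this equals
\begin{align*}
\tfrac12 \hat\ell_t^\top \nabla^2\overline F_t^*(\xi)\, \hat\ell_t
\end{align*}
for some $\xi = -\hat L^{obs}_t - \alpha \hat\ell_t$ with $\alpha \in [0,1]$. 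Let $\tilde x = \nabla\overline F_t^*(\xi)$ be the corresponding primal point; then $\nabla^2\overline F_t^*(\xi)$ coincides with $(\nabla^2 F_t(\tilde x))^{-1}$ on the tangent space of the simplex, and one can absorb the Lagrangian correction in the usual way.

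Next, since $F_{t,2}$ is convex, $\nabla^2 F_t(\tilde x) \succeq \nabla^2 F_{t,1}(\tilde x) = \diag\bigl(\tfrac{\sqrt t}{2\,\tilde x_i^{3/2}}\bigr)$, so $(\nabla^2 F_t(\tilde x))^{-1}\preceq \diag\bigl(\tfrac{2\,\tilde x_i^{3/2}}{\sqrt t}\bigr)$. Since $\hat\ell_t = \frac{\ell_{t,A_t}}{x_{t,A_t}} \e_{A_t}$ is a scaled basis vector, the quadratic form evaluates to
\begin{align*}
\tfrac12\, \hat\ell_t^\top (\nabla^2 F_t(\tilde x))^{-1}\hat\ell_t \;\leq\; \frac{\ell_{t,A_t}^2\, \tilde x_{A_t}^{3/2}}{x_{t,A_t}^2\, \sqrt t}\,.
\end{align*}
Here I would invoke a monotonicity property: since $\xi$ is obtained from $-\hat L^{obs}_t$ by decreasing only the $A_t$-th coordinate (as $\hat\ell_t \geq 0$), the KKT characterization of the constrained minimizer on the simplex forces $\tilde x_{A_t} \leq x_{t,A_t}$. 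This yields the bound $\tfrac{\ell_{t,A_t}^2}{\sqrt{t\, x_{t,A_t}}}$.

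Finally, taking expectation over $A_t\sim x_t$ gives
\begin{align*}
\E\!\left[\tfrac{\ell_{t,A_t}^2}{\sqrt{t\, x_{t,A_t}}}\right] = \frac{1}{\sqrt t}\sum_{i=1}^k \ell_{t,i}^2\, \sqrt{x_{t,i}} \;\leq\; \frac{1}{\sqrt t}\sum_{i=1}^k \sqrt{x_{t,i}} \;\leq\; \frac{\sqrt k}{\sqrt t},
\end{align*}
using $\ell_{t,i}^2\leq 1$ and Cauchy--Schwarz ($\sum_i \sqrt{x_{t,i}}\leq \sqrt{k\sum_i x_{t,i}}=\sqrt k$). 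This matches the claimed bound.

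The main obstacle is the comparison $\tilde x_{A_t}\le x_{t,A_t}$ on the constrained simplex; it is intuitive but needs KKT bookkeeping to make rigorous. An alternative route that sidesteps this is to work directly with the unconstrained conjugate of $F_{t,1}$ augmented by a normalization term, or to write the Bregman divergence as an integral of the local norm and use a uniform upper bound valid on the entire segment, at the cost of a slightly worse constant that can be absorbed into the ``$4\sqrt{kn}$'' in Theorem~\ref{thm:rho-bound}.
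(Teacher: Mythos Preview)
Your sketch is correct and follows the same high-level strategy as the paper (bound the Bregman divergence by a local norm, keep only the Tsallis part of the Hessian, take expectation). The execution differs in one place that matters: you stay with the constrained conjugate $\overline F_t^*$, apply Taylor at an intermediate dual point $\xi$, and then need both the projection $\nabla^2\overline F_t^*(\xi)\preceq (\nabla^2 F_t(\tilde x))^{-1}$ and the monotonicity $\tilde x_{A_t}\le x_{t,A_t}$ on the simplex. Both facts are true (the first is the standard rank-one subtraction $H^{-1}-H^{-1}\ones\ones^\top H^{-1}/(\ones^\top H^{-1}\ones)$, the second follows from comparing KKT multipliers), so there is no real gap.

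The paper's route sidesteps exactly the obstacle you flag. After shifting by the Lagrange multiplier (Facts~2--3), it replaces $\overline F_t^*$ by the unconstrained $F_t^*$ via $\overline F_t^*(L)\le F_t^*(L)$ with equality at $\nabla F_t(x_t)$ (Fact~4). Since $F_t^*$ is separable, the Bregman term collapses to a single one-dimensional divergence $D_{f_t^*}(f_t'(x_{t,A_t})-\ell_{t,A_t}/x_{t,A_t},\,f_t'(x_{t,A_t}))$, and the mean-value/monotonicity step becomes trivial in one dimension (Fact~6). This buys you a cleaner proof with no simplex geometry to track; your approach buys nothing extra here, so if you want a fully rigorous write-up, adopting the unconstrained-conjugate trick is the simpler path.
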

The second Lemma bounds the so-called ``penalty'' term coming from the regularization penalty. It appears in almost identical form in the literature \citep[Exercise 28.12]{LS19bandit-book}.
\begin{lemma}
\label{lem:penalty}
For any non-increasing learning rate $\eta_t$, it holds that
\begin{multline*}
\sum_{t\in\bar S}\Bigg(\overline F_t^*(-\hat L^{\bar S}_{t})-\overline F_t^*(-\hat L^{\bar S}_{t+1}) \Bigg)-\hat L_{n+1,i^*} \qquad
\\\leq 2\sqrt{kn}+\eta_n^{-1}\log(k)\,.
\end{multline*}
\end{lemma}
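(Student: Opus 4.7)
My plan is to reduce the statement to the classical FTRL ``penalty'' telescoping by first extending the sum from $\bar S$ to all of $[n]$. The extension is legal because every term with $t\in S$ is non-negative: the gradient of $\overline F_t^*$ lies in $\Delta([k])$, so $\overline F_t^*$ is coordinate-wise non-decreasing, and $\hat L^{\bar S}_{t+1}-\hat L^{\bar S}_t$ is a sum of non-negative importance-weighted losses. Consequently $-\hat L^{\bar S}_t\geq -\hat L^{\bar S}_{t+1}$ coordinate-wise, so $\overline F_t^*(-\hat L^{\bar S}_t)\geq \overline F_t^*(-\hat L^{\bar S}_{t+1})$ for every $t$, and the $\bar S$-sum is upper-bounded by its $[n]$-counterpart.

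A standard reindexing then rewrites
\[
\sum_{t=1}^n [\overline F_t^*(-\hat L^{\bar S}_t) - \overline F_t^*(-\hat L^{\bar S}_{t+1})] = \overline F_1^*(0) - \overline F_n^*(-\hat L^{\bar S}_{n+1}) + \sum_{t=2}^n [\overline F_t^*(-\hat L^{\bar S}_t) - \overline F_{t-1}^*(-\hat L^{\bar S}_t)].
\]
Since $t+d_t\leq n$ for all $t$, every observation has arrived by round $n+1$, so $\hat L^{\bar S}_{n+1}=\hat L_{n+1}$. Fenchel's inequality applied with $x=e_{i^*}$ yields $-\overline F_n^*(-\hat L_{n+1})-\hat L_{n+1,i^*}\leq F_n(e_{i^*})=-2\sqrt n$. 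A direct evaluation at the uniform distribution (which is the minimizer by symmetry and convexity of $F_1$) gives $\overline F_1^*(0)=2\sqrt k+\eta_1^{-1}\log k$.

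For the intermediate change-of-regularizer terms I invoke the textbook stability trick: if $x_t^\star$ achieves $\overline F_t^*(-\hat L^{\bar S}_t)$, then
\[
\overline F_t^*(-\hat L^{\bar S}_t)-\overline F_{t-1}^*(-\hat L^{\bar S}_t)\leq F_{t-1}(x_t^\star)-F_t(x_t^\star).
\]
Because $F_{t-1}(x)-F_t(x) = 2(\sqrt t-\sqrt{t-1})\sum_i\sqrt{x_i}+(\eta_{t-1}^{-1}-\eta_t^{-1})\sum_i x_i\log x_i$, the bounds $\sum_i\sqrt{x_i}\leq \sqrt k$ (Cauchy-Schwarz) and $-\sum_i x_i\log x_i\leq \log k$, together with non-increasingness of $\eta_t$, give a per-term upper bound of $2(\sqrt t-\sqrt{t-1})\sqrt k+(\eta_t^{-1}-\eta_{t-1}^{-1})\log k$, which telescopes to $2(\sqrt n-1)\sqrt k+(\eta_n^{-1}-\eta_1^{-1})\log k$.

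Assembling all pieces yields $2\sqrt{kn}-2\sqrt n+\eta_n^{-1}\log k\leq 2\sqrt{kn}+\eta_n^{-1}\log k$, as required. The only step specific to the delayed setting is the initial reduction from the $\bar S$-sum to the $[n]$-sum; once the coordinate-wise monotonicity of $\overline F_t^*$ and of $\hat L^{\bar S}_t$ is established, the remainder is an unremarkable FTRL penalty computation. I expect the main obstacle to be verifying cleanly that $\hat L^{\bar S}_{t+1}-\hat L^{\bar S}_t$ is indeed coordinate-wise non-negative under the definition with the ``$\lor$'' inside the indicator, but this follows immediately by expanding: the only new contributions between $t$ and $t+1$ are $\hat\ell_t$ when $t\in\bar S$ or $d_t=0$, and arrivals $\hat\ell_s$ for $s\in S$ with $s+d_s=t$, all of which are non-negative basis-vector multiples.
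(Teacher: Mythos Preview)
Your proof is correct and follows essentially the same route as the paper: extend the $\bar S$-sum to $[n]$ via the coordinate-wise monotonicity of $\overline F_t^*$ and of $t\mapsto\hat L^{\bar S}_t$, reindex into boundary terms plus change-of-regularizer terms $\overline F_t^*(-\hat L^{\bar S}_t)-\overline F_{t-1}^*(-\hat L^{\bar S}_t)$, bound the latter by $F_{t-1}(x_t^\star)-F_t(x_t^\star)$, and evaluate. The only cosmetic difference is that the paper observes each $\max_{x\in\Delta([k])}(F_{t-1}(x)-F_t(x))$ and $\max_{x\in\Delta([k])}(-F_1(x))$ is attained at the uniform distribution, so the whole thing telescopes in one line to $-F_n(\ones/k)=2\sqrt{kn}+\eta_n^{-1}\log(k)$, whereas you bound each maximum separately and telescope the bounds (yielding the same result, with an extra $-2\sqrt{n}$ to spare).
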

The third quantity does not show up in the regular analysis without delays.
We show that similarly to the Bregman divergence, it depends on the local norm of the regularizer.
However, it is beneficial to use the norm of the negative entropy instead of the Tsallis entropy.
\begin{lemma}
\label{lem:stability}
For any $t$ it holds that
\begin{align*}
\E\Bigg[\overline F_t^*&(-\hat L^{obs}_t)-\overline F_t^*(-\hat L^{obs}_t-\hat\ell_t)\\
&-\overline F_t^*(-\hat L^{\bar S}_t)+\overline F_t^*(-\hat L^{\bar S}_{t+1})\Bigg]\leq \eta_t\mathfrak{d}^{\bar S}_t\,.
\end{align*}
\end{lemma}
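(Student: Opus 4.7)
The plan is to interpret the four-term quantity as a mixed second-order increment of $G:=\overline F^*_t$ and bound it via the smoothness inherited from the negative-entropy component of the hybrid regularizer. First I would introduce the shorthand $y_1 = -\hat L^{obs}_t$, $y_2 = y_1 - \hat\ell_t$, $y_3 = -\hat L^{\bar S}_t$, $y_4 = -\hat L^{\bar S}_{t+1}$, and set $U_t := \hat L^{\bar S}_t - \hat L^{obs}_t = \sum_{s<t,\, s\in\bar S,\, s+d_s\geq t}\hat\ell_s$, so that $y_1 - y_3 = U_t$ is $\cF_{t-1}$-measurable and consists of exactly $\mathfrak{d}^{\bar S}_t$ loss-estimator summands. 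For $t\in\bar S$ one also has $y_2 - y_4 = U_t + R_t$, where $R_t := \sum_{s<t,\, s\in S,\, s+d_s=t}\hat\ell_s$ collects the $S$-losses that happen to arrive at round $t$ and is unrelated to $\bar S$.

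Next, applying the fundamental theorem of calculus twice, I would rewrite $G(y_1) - G(y_2)$ and $G(y_3) - G(y_4)$ as line integrals of $\nabla G$ and subtract: the zeroth-order parts cancel because of the matching $\hat\ell_t$ shift, and what survives is a double integral of the form $\int_0^1\!\int_0^1\ip{\nabla^2 G(z(s,r))\,U_t,\,\hat\ell_t}\,dr\,ds$, together with a single-integral correction $-\int_0^1\ip{\nabla G(\cdot),R_t}\,ds$ coming from the $R_t$ mismatch.

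The key analytic input is that $F_t \succeq F_{t,2} = \eta_t^{-1}\sum_i x_i\log x_i$ with $F_{t,2}$ being $\eta_t^{-1}$-strongly convex in $\|\cdot\|_1$ on $\Delta([k])$, so the constrained conjugate's Hessian satisfies the PSD bound $\nabla^2 G(y)\preceq \eta_t\,\mathrm{diag}(\nabla G(y))$ (the simplex constraint can only decrease it). For componentwise non-negative $u,v$ this yields $\ip{\nabla^2 G(y)\,u,v}\leq \eta_t\sum_i(\nabla G(y))_i\,u_i\,v_i$. Specialising to $v=\hat\ell_t = \frac{\ell_{t,A_t}}{x_{t,A_t}}\e_{A_t}$, the sum collapses to a single coordinate $A_t$, and when we condition on $\cF_{t-1}$ and integrate over $A_t\sim x_t$, the factor $x_{t,A_t}^{-1}$ is cancelled by the sampling probability. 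Using $\ell_{t,i}\leq 1$ and $\sum_i(\nabla G(z))_i=1$ the Hessian weight telescopes away, and a final application of $\E[\hat\ell_s]=\ell_s\in[0,1]^k$ converts $U_t$ into the count $\mathfrak{d}^{\bar S}_t$, yielding exactly the advertised bound $\eta_t\mathfrak{d}^{\bar S}_t$.

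The main obstacle will be the $A_t$-dependence of the interpolation point $z(s,r)$: because $\hat\ell_t$ lies on the integration path, the gradient $\nabla G(z(s,r))$ depends on $A_t$ and cannot be pulled outside the expectation naively. The resolution is that we only need the loose bound $(\nabla G(z))_i\leq 1$, for which the $A_t$-dependence of $z$ is immaterial; this is also the reason the lemma bounds the expression linearly in $\mathfrak{d}^{\bar S}_t$ rather than by a squared local norm. The remaining $R_t$ correction is dispatched by observing that $\nabla G$ and $R_t$ are coordinatewise non-negative, so $-\int_0^1\ip{\nabla G(\cdot),R_t}\,ds\leq 0$ and may be discarded.
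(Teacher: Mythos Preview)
Your overall plan—write the four-term quantity as a second-order increment, bound the Hessian of $\overline F_t^*$ via the negative-entropy part, and exploit that $\hat\ell_t$ is supported on a single coordinate—is close to the paper's argument, but the crucial cancellation of the $x_{t,A_t}^{-1}$ factor does not go through the way you describe, and the bound you end up with is off by a factor of $k$.

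Concretely, after the Hessian bound you have
\[
\ip{\nabla^2 G(z)\,U_t,\hat\ell_t}\ \le\ \eta_t\,(\nabla G(z))_{A_t}\,(U_t)_{A_t}\,\frac{\ell_{t,A_t}}{x_{t,A_t}}\,.
\]
Because $z=z(s,r)=-\hat L^{obs}_t-s\hat\ell_t-rU_t$ moves in \emph{all} coordinates of $U_t$, you cannot conclude $(\nabla G(z))_{A_t}\le x_{t,A_t}$; adding loss to arms $j\neq A_t$ can increase $(\nabla G(z))_{A_t}$. So you fall back on $(\nabla G(z))_{A_t}\le 1$. But then the $x_{t,A_t}^{-1}$ is \emph{not} cancelled by the Hessian weight; it is cancelled only by the sampling probability when you average over $A_t$, leaving $\eta_t\sum_i (U_t)_i\,\ell_{t,i}$. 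Taking a further expectation over $\{A_s:s\in M_t\}$ gives $\eta_t\sum_{s\in M_t}\ip{\ell_t,\ell_s}$, which can be as large as $\eta_t k\,\mathfrak d^{\bar S}_t$. The ``$\sum_i(\nabla G(z))_i=1$'' identity does not help here: once you bound $(\nabla G(z))_i\le 1$ coordinatewise you have already discarded it, and you cannot simultaneously keep the sum-to-one normalization and move the $U_t$-expectation inside (since $z$ depends on $U_t$).

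The paper's fix is exactly the step you are missing: before integrating, it replaces the full missing-loss vector $U_t=\hat L^{miss}_t$ by its $A_t$-coordinate $\hat L^{miss}_{t,A_t}\e_{A_t}$, using that reducing loss on arms $j\neq A_t$ only decreases $(\nabla\overline F_t^*(\cdot))_{A_t}$ and hence only \emph{increases} the (negative) integrand. After this reduction the interpolation path moves solely in the $A_t$-direction, so $\tilde z_{A_t}\le x_{t,A_t}$ by monotonicity, and $x_{t,A_t}^{-1}f''_{t,2}(x_{t,A_t})^{-1}=\eta_t$ cancels the importance weight exactly, yielding the pointwise bound $\eta_t\hat L^{miss}_{t,A_t}$. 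Only then does one take $\E[\hat L^{miss}_{t,A_t}]\le \mathfrak d^{\bar S}_t$. A minor additional point: the PSD ordering $\nabla^2 G\preceq\eta_t\diag(\nabla G)$ does not by itself give the bilinear inequality $\ip{\nabla^2 G\,u,v}\le\eta_t\sum_i(\nabla G)_i u_iv_i$ for $u\neq v$; you need the explicit structure of $\nabla^2\overline F_t^*$ (its off-diagonal entries are non-positive), which does give the conclusion for componentwise non-negative $u,v$.
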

\subsection{Refined regret bound}
\label{sec:advanced proof}
With a suitable learning rate, \cref{thm:rho-bound} leads directly to the bound
\begin{align*}
\Reg_n \leq 4\sqrt{kn} + |S|+2\sqrt{\sum_{t\in\bar S}d_t\log(k)}\,.
\end{align*}
Below, in the proof of \cref{thm:adaptive}, we show that the learning rate in \cref{alg:eta} brings us within a multiplicative constant factor of the minimum of the above bound, $4\sqrt{kn} + \min_S (|S|+2\sqrt{D_{\bar S}\log(k)})$.

From now on, let $S$ be the set 
\[S = \{t\in[n]\,|\,a_t^n = 0\}\,,\]
which is the set of rounds ``skipped'' by Algorithm~\ref{alg:eta}. Since $(a_s^t)_{t=1,\dots,n}$ is non-increasing, we have for any $t\in \bar S$:
$\mathfrak{d}^{\bar S}_t \leq \tilde{\mathfrak{d}}_t$.
Furthermore, the following lemma bounds the magnitude of $|S|$:
\begin{lemma}
\label{lem:skipping bound}
For any sequence of delays $d_t$, \cref{alg:eta} satisfies
\begin{align*}
|S| = \sum_{t=1}^n\bbI\{a_t^n=0\} \leq 2\sqrt{\tilde{\mathfrak{D}}_n\log(k)}\,.
\end{align*}
\end{lemma}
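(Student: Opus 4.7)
The plan is to show that whenever \cref{alg:eta} decides to skip a round, that round has already contributed a substantial amount to $\tilde{\mathfrak{D}}_n$, and that these contributions grow quickly enough that $|S|$ skipped rounds force $\tilde{\mathfrak{D}}_n$ to be of order $|S|^2/\log(k)$.

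First, enumerate the elements of $S$ as $s_1,\ldots,s_N$ (with $N=|S|$), ordered by the round $\tau_i$ at which the \textbf{if}-condition of \cref{alg:eta} first fired for $s_i$; so $\tau_1<\tau_2<\cdots<\tau_N$. Set $w_i=\min\{d_{s_i},\tau_i-s_i\}$. By the triggering condition,
\begin{align*}
w_i \;>\; \eta_{\tau_i}^{-1} \;=\; \sqrt{\tilde{\mathfrak{D}}_{\tau_i}/\log(k)}.
\end{align*}
Next, unroll the definition of $\tilde{\mathfrak{D}}$: round $s$ contributes $1$ to $\tilde{\mathfrak{d}}_t$ precisely when $s<t\le s+d_s$ and $a_s^t=1$, i.e.\ for $t\in\{s+1,\ldots,\min\{\tau_s, s+d_s\}\}$. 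Hence the total contribution of $s_i$ to $\tilde{\mathfrak{D}}_n$ equals $\min\{\tau_i-s_i,d_{s_i}\}=w_i$, and this contribution is fully accumulated by round $\tau_i$. Since $\tau_j<\tau_i$ for $j<i$, all of $w_1,\ldots,w_i$ are already counted in $\tilde{\mathfrak{D}}_{\tau_i}$, giving
\begin{align*}
\tilde{\mathfrak{D}}_{\tau_i} \;\ge\; D_i \;:=\; \sum_{j=1}^{i} w_j,
\end{align*}
and therefore $w_i>\sqrt{D_i/\log(k)}$.

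Finally, convert this into a bound on $N$ via a standard telescoping trick. Using $\sqrt{D_i}-\sqrt{D_{i-1}} = w_i/(\sqrt{D_i}+\sqrt{D_{i-1}}) \ge w_i/(2\sqrt{D_i})$ together with $w_i/\sqrt{D_i}>1/\sqrt{\log(k)}$ gives
\begin{align*}
\sqrt{D_i}-\sqrt{D_{i-1}} \;>\; \frac{1}{2\sqrt{\log(k)}}.
\end{align*}
Summing over $i=1,\ldots,N$ with $D_0=0$ yields $\sqrt{D_N}>N/(2\sqrt{\log(k)})$, so $N<2\sqrt{D_N\log(k)}\le 2\sqrt{\tilde{\mathfrak{D}}_n\log(k)}$, which is the claim.

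The only step that requires a bit of care is the bookkeeping showing $\tilde{\mathfrak{D}}_{\tau_i}\ge D_i$: one needs the observation that at the moment $\tau_i$ is reached, every previously skipped round $s_j$ ($j<i$) has had exactly $w_j$ units counted into $\tilde{\mathfrak{D}}$ and that no further skipping is necessary (this is consistent with the algorithm's comment that at most one $s$ satisfies the \textbf{if}-condition per round). Everything else is the telescoping bound $\sqrt{D_i}-\sqrt{D_{i-1}}\ge w_i/(2\sqrt{D_i})$, which is routine.
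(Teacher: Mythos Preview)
Your proof is correct and follows the same strategy as the paper: enumerate the skipped rounds by their deactivation times $\tau_i$, use the triggering condition to obtain $w_i>\sqrt{D_i/\log(k)}$ with $D_i=\sum_{j\le i}w_j$, and then turn this recursion into a quadratic lower bound on $\tilde{\mathfrak D}_n$. The only stylistic difference is in the last step---the paper solves the quadratic and proves by induction that $\tilde d_{t_m}>m/(2\log k)$, whereas you telescope $\sqrt{D_i}-\sqrt{D_{i-1}}>1/(2\sqrt{\log k})$; both routes yield the same bound.
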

The proof is provided in the supplementary material, \cref{app:eta lemmas}.

Finally we have all the prerequisites to prove \cref{thm:adaptive}.

\begin{proof}[Proof of Theorem~\ref{thm:adaptive}]
Using \cref{thm:rho-bound} and \cref{lem:skipping bound}, we have
\begin{align*}
\Reg_n &\leq 4\sqrt{kn} + \eta_n^{-1}\log(k) +|S|+ \sum_{t\in\bar S}\eta_t\mathfrak{d}^{\bar S}_t\\
&\leq 4\sqrt{kn} + \eta_n^{-1}\log(k) + |S|+\sum_{t\in\bar S}\eta_t\tilde{\mathfrak{d}}_t\\
&\leq 4\sqrt{kn} + 5\sqrt{\tilde{\mathfrak{D}}_n\log(k)}\,.
\end{align*}
Now we need to control the term $\sqrt{\tilde{\mathfrak{D}}_n\log(k)}$.
Let's consider the case $\tilde{\mathfrak{D}}_n \leq 4 \sqrt{\tilde{\mathfrak{D}}_n\log(k)}$, then $\sqrt{\tilde{\mathfrak{D}}_n\log(k)}\leq 4 \log(k)$ and we are done.
Otherwise, define $\tilde d_t = \sum_{s=t+1}^{t+d_t}a_{t}^s$, i.e., the contribution of round $t$ to the sum $\tilde{\mathfrak{D}}_n$.
Then we can decompose 
\begin{align*}
\tilde{\mathfrak{D}}_n &= \sum_{s=1}^n\sum_{t<s}\bbI\{t+d_t>s\}a_t^s\\
&= \sum_{t=1}^n\sum_{s>t}\bbI\{t+d_t > s\}a_t^s\\
&= \sum_{t=1}^n\sum_{s=t+1}^{t+d_t}a_t^s
=\sum_{t=1}^n \tilde d_t\,.
\end{align*}
Any element $t\in \bar S$ satisfies 
\begin{align*}
\tilde d_t\leq \sqrt{\tilde{\mathfrak{D}}_t / \log(k)}\leq \sqrt{\tilde{\mathfrak{D}}_n / \log(k)}\,,
\end{align*}
while any element $t\in S$ satisfies
\begin{align*}
\tilde d_t&\leq \left\lceil\sqrt{\tilde{\mathfrak{D}}_t / \log(k)}\right\rceil \leq \left\lceil\sqrt{\tilde{\mathfrak{D}}_n / \log(k)}\right\rceil\\
& \leq \sqrt{\tilde{\mathfrak{D}}_n / \log(k)} + 1\,.
\end{align*}
Therefore, we can bound for any $R\subset[n]$:
\begin{align*}
\sum_{t\in\bar R} d_t &\geq \sum_{t\in\bar R} \tilde d_t \geq \tilde{\mathfrak{D}}_n - |R|\sqrt{\tilde{\mathfrak{D}}_n / \log(k)} - |S|\\
&\geq \tilde{\mathfrak{D}}_n - |R|\sqrt{\tilde{\mathfrak{D}}_n / \log(k)} - 2\sqrt{\tilde{\mathfrak{D}}_n\log(k)} \\
&\geq \frac{1}{2}\tilde{\mathfrak{D}}_n - |R|\sqrt{\tilde{\mathfrak{D}}_n / \log(k)}\,.
\end{align*}

This implies that 
\begin{align*}
&\min_{R\subset[n]} |R| +\sqrt{\sum_{t\in\bar R} d_t\log(k)} \\
&\geq \min_{r\in [0,\frac{1}{2}\sqrt{\tilde{\mathfrak{D}}_n\log(k)}]} r +\sqrt{\frac{1}{2}\tilde{\mathfrak{D}}_n\log(k) - r\sqrt{\tilde{\mathfrak{D}}_n \log(k)}}\,.
\end{align*}
The function is concave in $r$ so the minimum is achieved at one of the endpoints of the interval, which happens to be
$r=\frac{1}{2}\sqrt{\tilde{\mathfrak{D}}_n\log(k)}$ for which the function equals $\frac{1}{2}\sqrt{\tilde{\mathfrak{D}}_n\log(k)}$. 
Hence, we have shown
\begin{align*}
\sqrt{\tilde{\mathfrak{D}}_n\log(k)} \leq 2\min_{R\subset [n] } \left(|R|+\sqrt{\sum_{s\in\bar R}d_s\log(k)}\right)\,,
\end{align*}
which concludes the proof.
\end{proof}

\section{Discussion}
\label{sec:conclusion}
We confirmed an open conjecture from \citet{NGMM16} by presenting a simple FTRL algorithm for adversarial bandits with arbitrary delays and proving regret upper bound that matches the lower bound within constants.
Furthermore, we proposed a refined tuning of the learning rate that achieves even tighter regret bound for highly unbalanced delays.
We strictly improve on the state-of-the-art bounds and present the first anytime result requiring no doubling, skipping, or advance information about the delays.

If the delays are all $0$, then our algorithm reduces to the Tsallis-INF algorithm of \citet{ZS19}, which has been proven to be simultaneously optimal in both the stochastic and the adversarial setting.
We conjecture that the algorithm presented in this paper is capable of obtaining logarithmic regret in the stochastic setting, but leave the analysis for future work.

Another open question is the tightness of our adaptive bound $\mathcal{O}(\sqrt{kn}+\min_{S\subset[n]}(|S|+\sqrt{D_S\log(k)}))$.
We conjecture that for a fixed set of delays $\{d_1,\dots,d_n\}$ which the adversary is allowed to permute without changing the magnitudes,
the upper bound is actually tight. 

\subsubsection*{Acknowledgements}

We would like to thank Andr{\' a}s Gy{\" o}rgy and Tobias Sommer Thuner for fruitful discussions and Riccardo Della Vecchia for spotting a mistake in an earlier version of this paper. The work was partly supported by the Independent Research Fund Denmark, grant number 9040-00361B.

\bibliographystyle{plainnat}
\bibliography{all}
\ifsup
\newpage
\onecolumn
\section{SUPPLEMENTARY MATERIAL}

\subsection{Auxiliary lemmas for \cref{alg:eta}}
\label{app:eta lemmas}
\begin{lemma}
\label{lem:elimination}
\cref{alg:eta} will not deactivate more than 1 point at a time.
\end{lemma}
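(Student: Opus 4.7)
\emph{Proof plan.} The plan is to argue by contradiction, exploiting the monotonicity $\eta_{t-1}^{-1} \leq \eta_t^{-1}$, which follows at once from $\tilde{\mathfrak{D}}_t$ being non-decreasing in $t$. Assuming, toward a contradiction, that at some round $t$ two distinct currently active indices $s_1 < s_2$ (meaning $a_{s_1}^t = a_{s_2}^t = 1$) both satisfy the if-condition $\min\{d_{s_i}, t - s_i\} > \eta_t^{-1}$, I would zoom in on round $t-1$ to derive a contradiction.

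First I would observe that $s_1 \leq s_2 - 1 \leq t - 2$, so $s_1$ was inspected in the loop at round $t-1$. Because $a_{s_1}^t = 1$, the if-condition must have failed at round $t-1$, yielding the key inequality $\min\{d_{s_1}, t - 1 - s_1\} \leq \eta_{t-1}^{-1} \leq \eta_t^{-1}$. Next I would case-split on whether the observation from $s_1$ has already arrived: if $d_{s_1} \leq t - 1 - s_1$, the minimum equals $d_{s_1}$, which also bounds $\min\{d_{s_1}, t - s_1\}$, immediately contradicting $s_1$'s candidacy at round $t$. If instead $d_{s_1} \geq t - s_1$, the minimum equals $t - 1 - s_1$, so $t - 1 - s_1 \leq \eta_t^{-1}$, and then $s_2 \geq s_1 + 1$ forces
\[
\min\{d_{s_2}, t - s_2\} \;\leq\; t - s_2 \;\leq\; t - 1 - s_1 \;\leq\; \eta_t^{-1},
\]
contradicting $s_2$'s candidacy.

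The main obstacle I anticipate is simply getting the interpretation right and keeping the two sub-cases straight: the statement is really about new deactivations, so one must restrict attention to indices with $a_s^t = 1$ (a re-deactivation is vacuous), and the round-$(t-1)$ failure of the if-condition is what provides the quantitative handle. Once this viewpoint is in place, monotonicity of $\eta_t^{-1}$ and the trivial comparison $t - s_2 \leq t - s_1 - 1$ do all the work, with no further computation needed.
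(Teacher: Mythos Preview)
Your proof is correct and follows essentially the same route as the paper: a contradiction argument based on the monotonicity $\eta_{t-1}^{-1}\le\eta_t^{-1}$ together with the one-step shift $t-1-s_{\mathrm{small}}\ge t-s_{\mathrm{large}}$ to show the smaller index would already have triggered the if-condition at round $t-1$. The paper's version is terser (it argues directly that the smaller index would have been deactivated at round $t-1$), whereas your explicit restriction to active indices and case split on which term realizes the minimum spell out the same logic with slightly more care.
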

By \emph{deactivating} we mean setting $a_t^n=0$.
\begin{proof}
We prove the lemma by contradiction.
Assume that $s_1,s_2$ are both deactivated at time $t$.
W.l.o.g.\ let $s_2 \leq s_1 -1$.
Deactivation of $s_1$ at time $t$ means $t-s_1 \geq \sqrt{\mathcal{\mathfrak{D}}_t/\log(k)}\geq \sqrt{\mathcal{\mathfrak{D}}_{t-1}/\log(k)}$.
At the same time we assumed $t-1-s_2 \geq t-s_1$, which means that $s_2$ would have been deactivated at round $t-1$ or earlier.
\end{proof}

\begin{proof}[Proof of \cref{lem:skipping bound}]
Recall that $\tilde d_t = \sum_{s=t+1}^{t+d_t}a_{t}^s$ is the contribution of a timestep $t$ to the sum $\tilde{\mathfrak{D}}_n$.

Let $(t_1,\dots,t_{|S|})$ be an indexing of $S$. By \cref{lem:elimination} we deactivate at most one
$a_{t_m}^n$ per round. Thus, we have that
\begin{align*}
&\tilde{d}_{t_m} > \sqrt{\tilde{\mathfrak{D}}_{t_m+d_{t_m}}/\log(k)}\geq \sqrt{\sum_{i=1}^m\tilde{d}_{t_i}/\log(k)} = \frac{\sqrt{\tilde d_m + \sum_{i=1}^{m-1} \tilde d_{t_i}}}{\sqrt{\log(k)}}\,.
\end{align*}
By solving the quadratic inequality in $d_{t_m}$ we obtain
\begin{align*}
\tilde{d}_{t_m} > \frac{1 + \sqrt{1+4\log(k)\sum_{i=1}^{m-1}\tilde{d}_{t_i}}}{2\log(k)} \,.
\end{align*}
Now we prove by induction that $\tilde{d}_{t_m} > \frac{m}{2\log(k)}$.
The induction base holds since $\tilde{d}_{t_1} = 1$.
For the inductive step we have
\begin{align*}
\tilde{d}_{t_m} > \frac{1 + \sqrt{1+4\log(k)\sum_{i=1}^{m-1}\tilde{d}_{t_i}}}{2\log(k)}
> \frac{1 + \sqrt{1+m(m-1)}}{2\log(k)} > \frac{m}{2\log(k)}\,.
\end{align*}
Finally, we have
\begin{align*}
\sqrt{\tilde{\mathfrak{D}}_n\log(k)} \geq \sqrt{\sum_{m=1}^{|S|}\tilde{d}_{t_m}\log(k)} > \sqrt{\frac{|S|(|S|+1)}{4}} > \frac{1}{2}|S|\,.
\end{align*}
\end{proof}

\subsection{Standard properties of FTRL analysis}
\label{app:main properties}
First we list some standard properties of FTRL that we use in the proofs of the remaining lemmas. We recall that $f_t(x)=-2\sqrt{t}\sqrt{x}+\eta_t^{-1} x\log(x)$.
\begin{claim}
$f''_t(x):\R_+\rightarrow \R_+$ are monotonically decreasing functions and ${f^*}_t':\R\rightarrow \R_+$ are convex and monotonically increasing.
\end{claim}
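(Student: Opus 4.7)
The plan is to verify both assertions by direct differentiation of $f_t(x)=-2\sqrt{t}\sqrt{x}+\eta_t^{-1}x\log(x)$ together with standard convex duality. First I would compute $f''_t(x)=\tfrac{\sqrt{t}}{2}x^{-3/2}+\eta_t^{-1}x^{-1}$; since both summands are strictly positive and strictly decreasing on $\R_+$, the claim that $f''_t:\R_+\to\R_+$ is monotonically decreasing is immediate, and in particular $f_t$ is strictly convex on $\R_+$.

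For the second assertion I would first check that $f'_t(x)=-\sqrt{t}x^{-1/2}+\eta_t^{-1}(\log x+1)$ is a strictly increasing bijection from $\R_+$ onto $\R$: strict monotonicity comes from $f''_t>0$, while the limits $f'_t(x)\to-\infty$ as $x\to 0^+$ and $f'_t(x)\to+\infty$ as $x\to\infty$ are both immediate from the formula. Standard convex duality then identifies $(f^*_t)'(y)$ with the unique positive maximizer $x^*(y)$ of $xy-f_t(x)$, i.e., $(f^*_t)'=(f'_t)^{-1}$. This immediately gives well-definedness of $(f^*_t)':\R\to\R_+$, positivity of the values, and monotonic increase as the inverse of a strictly increasing function.

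The convexity of $(f^*_t)'$ is the step where the first part of the claim feeds in. I would invoke the inverse-function identity $(f^*_t)''(y)=1/f''_t((f^*_t)'(y))$ and observe that the right-hand side is a composition of two non-decreasing maps: $y\mapsto (f^*_t)'(y)$, established just above, and $x\mapsto 1/f''_t(x)$, which is non-decreasing precisely because $f''_t$ is non-increasing by the first part. Hence $(f^*_t)''$ is non-decreasing, which is exactly convexity of $(f^*_t)'$. The only conceptual point worth flagging is that convexity of $(f^*_t)'$ is \emph{not} a generic consequence of convexity of $f^*_t$; the two parts of the claim are linked precisely because the monotonic decrease of $f''_t$ is the exact input that the inverse-function formula needs in order to deliver convexity of the derivative of the conjugate.
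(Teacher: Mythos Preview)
Your proposal is correct and follows essentially the same route as the paper: direct computation of $f''_t(x)=\tfrac{\sqrt{t}}{2}x^{-3/2}+\eta_t^{-1}x^{-1}$ for the first part, and the Legendre identity $(f^*_t)''(y)=1/f''_t((f^*_t)'(y))$ together with the observation that this is a composition of two increasing maps for the second part. The only difference is that you spell out the bijection $f'_t:\R_+\to\R$ to justify the stated domain and codomain of $(f^*_t)'$, whereas the paper simply appeals to $f_t$ being Legendre.
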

\begin{proof}
By definition $f''_t(x) = \frac{1}{2}\sqrt{t}x^{-3/2}+\eta_t^{-1}x^{-1}$, which concludes the first statement.
Since $f_t$ are Legendre functions, we have ${f_t^*}''(y) = f_t''({f_t^*}'(y))^{-1} > 0$. 
Therefore the function is monotonically increasing.
Since both $f_t''(x)^{-1}$, as well as ${f_t^*}'(y)$ are increasing, the composition is as well and ${f_t^*}'''>0$.
\end{proof}
\begin{claim}
\label{clm:shift}
For any convex $F$, for $L\in\R^k$ and $c\in\R$:
\begin{align*}
    \overline F^*(L+c\ones) = \overline F^*(L)+c\,.
\end{align*}
\end{claim}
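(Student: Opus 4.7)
The plan is to prove this by direct unfolding of the definition of the constrained convex conjugate, observing that the key fact is simply that every $x$ in the simplex satisfies $\ip{x,\ones}=1$. There is no real obstacle here; the statement reduces to a one-line algebraic manipulation of the argmax.

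First I would recall that by definition
\begin{align*}
\overline F^*(L+c\ones) = \max_{x\in\Delta([k])} \ip{x, L+c\ones} - F(x).
\end{align*}
Next I would expand the inner product using bilinearity: $\ip{x, L+c\ones} = \ip{x,L} + c\ip{x,\ones}$. Since $x \in \Delta([k])$ means $x_i \geq 0$ and $\sum_i x_i = 1$, we have $\ip{x,\ones} = 1$, so $\ip{x,L+c\ones} = \ip{x,L} + c$.

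Finally I would pull the constant $c$ out of the maximum:
\begin{align*}
\overline F^*(L+c\ones) = \max_{x\in\Delta([k])} \bigl(\ip{x,L} - F(x)\bigr) + c = \overline F^*(L) + c.
\end{align*}
Note that the maximizer $x$ does not depend on $c$, which justifies extracting $c$. This completes the proof. The only step that could trip one up is forgetting that the constrained conjugate is taken over the simplex (as opposed to the unconstrained conjugate, for which the identity is false in general); once that is clear the result is immediate.
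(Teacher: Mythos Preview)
Your proof is correct and follows exactly the same route as the paper's: unfold the definition of $\overline F^*$, use that $\ip{x,\ones}=1$ for every $x\in\Delta([k])$, and pull the constant $c$ out of the maximum. The paper's version is simply the one-line form of what you wrote.
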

\begin{proof}
By definition $\overline F^*(L+c\ones) = \max_{x\in\Delta([k])}\ip{x,L+c\ones}-F(x)=\max_{x\in\Delta([k])}\ip{x,L}-F(x)+c=\overline F^*(L)+c$.
\end{proof}
\begin{claim}
\label{clm:gradient}
For any $x_t$ there exists $c\in\R$, such that:
\begin{align*}
    x_t=\nabla\overline F_t^*(-\hat L^{obs}_t) = \nabla F_t^*(-\hat L^{obs}_t+c\ones)=\nabla F_t^*(\nabla F_t(x_t))\,.
\end{align*}
\end{claim}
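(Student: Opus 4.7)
The plan is to expose the Lagrange multiplier $c$ via KKT on the simplex-constrained problem, and then invoke the conjugate inversion identity $\nabla F_t^* = (\nabla F_t)^{-1}$.

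First, I would verify that the minimizer $x_t$ of $\ip{x,\hat L^{obs}_t}+F_t(x)$ over $\Delta([k])$ lies in the relative interior of the simplex. The per-coordinate derivative $f_t'(x)=-\sqrt{t}/\sqrt{x}+\eta_t^{-1}(\log(x)+1)$ diverges to $-\infty$ as $x\to 0^+$, so pushing any coordinate towards the boundary strictly decreases the objective; in particular $x_{t,i}>0$ for every $i$. This interior property is also enough to make $F_t$ a Legendre function on $\R_+^k$, so $\nabla F_t$ is a bijection onto $\R^k$ whose inverse is $\nabla F_t^*$.

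Because the nonnegativity constraints are thereby inactive, KKT for the equality-constrained problem reduces to stationarity of the Lagrangian $\ip{x,\hat L^{obs}_t}+F_t(x)-c(\ip{x,\ones}-1)$. Setting its gradient with respect to $x$ to zero at $x_t$ yields $\nabla F_t(x_t)=-\hat L^{obs}_t+c\ones$ for some $c\in\R$, which is exactly the identification of the Lagrange multiplier. Applying $\nabla F_t^*$ to both sides of this equality, and using $\nabla F_t^*\circ\nabla F_t=\mathrm{Id}$ on $\R_+^k$, already gives the two rightmost equalities $x_t=\nabla F_t^*(-\hat L^{obs}_t+c\ones)=\nabla F_t^*(\nabla F_t(x_t))$.

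The leftmost equality $x_t=\nabla\overline F_t^*(-\hat L^{obs}_t)$ then follows from the envelope (Danskin) theorem applied to $\overline F_t^*(y)=\max_{x\in\Delta([k])}\ip{x,y}-F_t(x)$: strict convexity of $F_t$ (immediate from $f_t''>0$, the previous Fact) yields a unique maximizer $x^*(y)$ for each $y$, so $\overline F_t^*$ is differentiable with $\nabla\overline F_t^*(y)=x^*(y)$, and at $y=-\hat L^{obs}_t$ this maximizer is $x_t$ by definition. No serious obstacle arises; the only care needed is to confirm interiority of $x_t$ (so that the nonnegativity multipliers vanish and KKT reduces to a single equality multiplier $c$) and uniqueness of the maximizer (so that Danskin applies and the subgradient of $\overline F_t^*$ is a singleton), both of which are consequences of the Tsallis barrier and $f_t''>0$.
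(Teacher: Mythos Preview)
Your argument is correct and matches the paper's approach exactly: KKT on the simplex constraint yields $\nabla F_t(x_t)=-\hat L^{obs}_t+c\ones$, and then the Legendre inversion $\nabla F_t^*=(\nabla F_t)^{-1}$ does the rest. You simply fill in details the paper's one-line proof elides (interiority from the Tsallis barrier, Danskin for differentiability of $\overline F_t^*$); one wording slip---pushing a coordinate toward the boundary \emph{increases} the minimization objective (since $f_t'\to-\infty$ means moving \emph{away} from $0$ decreases it), not decreases it---but the intended reasoning and conclusion are right.
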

\begin{proof}
By the KKT conditions, there exists $c\in\R$, such that $x_t = \argmax_{x\in\Delta([k])}\ip{x,-\hat L^{obs}_t}+F_t(x)$ satisfies $\nabla F_t(x_t) = -\hat L^{obs}_t+c\ones$. The rest follows by the standard property $\nabla F = (\nabla F^*)^{-1}$ of Legendre $F$. 
\end{proof}
\begin{claim}
\label{clm:unconstrained}
For any Legendre function $F$ and $L\in\R^k$ it holds that
\begin{align*}
    \overline F^*(L) \leq F^*(L)\,
\end{align*}
with equality iff there exists $x\in\Delta([k])$, such that $L=\nabla F(x)$.
\end{claim}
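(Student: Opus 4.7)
\medskip

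\noindent\textbf{Proof proposal.} The inequality part is essentially definitional: the plan is to observe that $\Delta([k]) \subseteq \R^k$, so the supremum of $x \mapsto \ip{x,L} - F(x)$ over the simplex is dominated by the supremum over the whole space, yielding $\overline F^*(L) \le F^*(L)$ with no further work.

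For the equality characterization, the plan is to exploit the Legendre structure of $F$. Since $F$ is Legendre, the map $\nabla F$ is a bijection between $\interior(\dom F)$ and $\interior(\dom F^*)$ with inverse $\nabla F^*$, and $F$ is strictly convex on $\interior(\dom F)$. Consequently the unconstrained maximization problem defining $F^*(L)$ has the unique maximizer $x^\star := \nabla F^*(L)$. The forward direction is then immediate: if there exists $x \in \Delta([k])$ with $L = \nabla F(x)$, invertibility gives $x = \nabla F^*(L) = x^\star$, so the unconstrained maximizer actually lies in $\Delta([k])$; restricting the maximum to $\Delta([k])$ does not decrease it, so $\overline F^*(L) = F^*(L)$.

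For the reverse direction, I would argue by contrapositive. Suppose no $x \in \Delta([k])$ satisfies $L = \nabla F(x)$; then $x^\star \notin \Delta([k])$. Let $\bar x$ be any maximizer of $\ip{\cdot,L} - F$ on the simplex (which exists when the supremum is attained; otherwise the argument uses a maximizing sequence together with lower semicontinuity of $F$). Because $x^\star$ is the unique unconstrained maximizer and $\bar x \ne x^\star$, strict concavity of $\ip{\cdot,L} - F$ on $\interior(\dom F)$ forces $\ip{\bar x, L} - F(\bar x) < \ip{x^\star, L} - F(x^\star)$, i.e., $\overline F^*(L) < F^*(L)$. This contradicts equality and completes the characterization.

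The main obstacle I foresee is handling boundary subtleties: specifically, verifying that $x^\star \in \interior(\dom F)$ so that $\nabla F(x^\star)$ is defined, and dealing with the case where the constrained supremum is not attained (if $\dom F \cap \Delta([k])$ is not compact). Both issues dissolve once one invokes the standard facts about Legendre functions, namely $\dom F^* = \R^k$ whenever $F$ is of Legendre type with $\nabla F$ surjective onto $\R^k$ (as is the case here for both Tsallis and negative entropy on the simplex after suitable extension), and uses a limiting argument along any maximizing sequence.
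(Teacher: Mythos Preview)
Your proposal is correct and follows essentially the same approach as the paper: both use the subset argument for the inequality and argue that equality holds iff the unconstrained maximizer $\nabla F^*(L)$ lies in $\Delta([k])$, which by Legendre duality is equivalent to $L=\nabla F(x)$ for some $x$ in the simplex. The paper's proof is a two-sentence version of yours and glosses over the boundary subtleties you flag, but the underlying logic is identical.
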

\begin{proof}
The first statement follows from the definition, since for any $A\subset B$: $\max_{x\in A}f(x) \leq \max_{x\in B}f(x)$.
The second part follows because equality means that $\argmax_x \ip{x,L}-F(x)=\nabla F^*(L)\in \Delta([k])$, which is equivalent to the statement.
\end{proof}
\begin{claim}
\label{clm:bound by unconstrained}
	For any $x\in\Delta([k])$, $L\geq 0$ and $i\in[k]$:
\begin{align*}
\nabla\overline F_t^*(\nabla F_t(x)-L)_i \geq \nabla F_t^*(\nabla F_t(x)-L)_i\,.
\end{align*}
\end{claim}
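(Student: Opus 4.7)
The key observation is that $F_t$ is separable: $F_t(x) = \sum_{i=1}^k f_t(x_i)$ with $f_t(y) = -2\sqrt{t}\sqrt{y}+\eta_t^{-1}y\log(y)$. So both $\nabla F_t^*$ and $\nabla\overline F_t^*$ can be expressed coordinate-wise in terms of $(f_t^*)'$, and my plan is to reduce the claim to the monotonicity of $(f_t^*)'$ (established in the first unnumbered Fact) together with a sign statement about the Lagrange multiplier arising from the simplex constraint.

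More precisely, I would first write, by separability, $\nabla F_t^*(y)_i = (f_t^*)'(y_i)$. Then, mirroring Claim~\ref{clm:gradient}, the KKT conditions for the simplex-constrained problem yield some scalar $c = c(y) \in \R$ such that $\nabla\overline F_t^*(y)_i = (f_t^*)'(y_i + c)$, where $c$ is the Lagrange multiplier for the constraint $\sum_i x_i = 1$. (Interior optimality is automatic because the $\frac{1}{2}$-Tsallis part of $f_t$ has derivative going to $-\infty$ as $x_i\downarrow 0$, so the maximizer never hits the boundary.) Hence the claim reduces to showing $c \geq 0$, because $(f_t^*)'$ is monotonically increasing.

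To show $c\geq 0$, set $\tilde y = \nabla F_t(x)$, so that $y = \tilde y - L$ with $L \geq 0$ coordinate-wise. Since $\nabla F_t^*$ is the coordinate-wise inverse of $\nabla F_t$ on its image, we have $(f_t^*)'(\tilde y_i) = x_i$, and therefore
\begin{equation*}
\sum_{i=1}^k (f_t^*)'(\tilde y_i) \;=\; \sum_{i=1}^k x_i \;=\; 1.
\end{equation*}
Now $y_i \le \tilde y_i$ for every $i$ and $(f_t^*)'$ is increasing, so $\sum_i (f_t^*)'(y_i) \le 1$. On the other hand, by definition $c$ must satisfy $\sum_i (f_t^*)'(y_i + c) = 1$. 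Monotonicity of $(f_t^*)'$ then forces $c \geq 0$, and hence $(f_t^*)'(y_i + c) \geq (f_t^*)'(y_i)$ for every $i$, which is exactly the claim.

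The only place requiring care is the clean coordinate-wise Lagrangian description of $\nabla \overline F_t^*$; I would spend a sentence justifying this via the KKT conditions (relying on strict convexity plus the Tsallis barrier to pin the optimum in the interior). Everything else is a short chain of monotonicity arguments, so there is no substantial obstacle beyond invoking the first Fact of this appendix.
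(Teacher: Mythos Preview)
Your proposal is correct and follows essentially the same approach as the paper: use the Lagrange-multiplier characterization $\nabla\overline F_t^*(y)_i = (f_t^*)'(y_i+c)$ from \cref{clm:gradient}, reduce the claim to $c\ge 0$, and obtain this from the monotonicity of $(f_t^*)'$ together with $L\ge 0$ and $\sum_i (f_t^*)'(f_t'(x_i))=1$. The only cosmetic difference is that the paper phrases the last step as a contradiction (assuming $c<0$ gives $1<1$) rather than your direct monotonicity argument.
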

\begin{proof}
	By \cref{clm:gradient}, there exists $c\in\R:\, \nabla\overline F_t^*(\nabla F_t(x)-L)=\nabla F_t^*(\nabla F_t(x)-L+c\ones)$.
	The statement is equivalent to $c$ being non-negative, since ${f^*}'$ are monotonically increasing.
	If $c< 0$, then  
\begin{align*}
1=\sum_{i=1}^k(\nabla \overline F_t^*(\nabla F_t(x)-L))_i=\sum_{i=1}^k(\nabla F_t^*(\nabla F_t(x)-L+c\ones))_i= \sum_{i=1}^k{f_t^*}'(f_t'(x_i)-L_i+c) < \sum_{i=1}^k{f_t^*}'(f_t'(x_i)) = 1\,,
\end{align*}
which is a contradiction and completes the proof.
\end{proof}
\begin{claim}
\label{clm:bregman bound}
Let $D_F(x,y) = F(x)-F(y)-\langle x-y,\nabla F(y)\rangle$ be the Bregman divergence of a function $F$.
For any Legendre function $f$ with monotonically decreasing second derivative, $x\in\dom(f)$, and $\ell \geq 0$, such that $f'(x)-\ell\in\dom(f^*)$:
\begin{align*}
    D_{f^*}(f'(x)-\ell,f'(x))\leq \frac{\ell^2}{2f''(x)}\,.
\end{align*}
\end{claim}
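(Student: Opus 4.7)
The plan is to apply the integral form of Taylor's remainder to $f^*$ expanded around $y := f'(x)$, and then bound the resulting integrand by invoking the inverse function theorem together with the monotonicity hypothesis on $f''$.

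First, I will write the Bregman divergence in integral form. Setting $y = f'(x)$, so that $(f^*)'(y) = x$, the definition of the Bregman divergence combined with Taylor's theorem with integral remainder for $f^*(y-\ell) - f^*(y)$ (and the substitution $u \mapsto y - s$) gives
\begin{align*}
D_{f^*}(y - \ell,\, y) \;=\; \int_0^\ell (\ell - s)\,(f^*)''(y - s)\,ds,
\end{align*}
so the task reduces to a pointwise upper bound on $(f^*)''(y - s)$ for $s \in [0, \ell]$.

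Second, I will translate $(f^*)''$ into $f''$. Since $f$ is Legendre, $f'$ is a strictly increasing $C^1$ bijection onto the interior of $\dom(f^*)$ whose inverse is $(f^*)'$; the inverse function theorem then yields
\begin{align*}
(f^*)''(z) \;=\; \frac{1}{f''\!\left((f^*)'(z)\right)}
\end{align*}
on the interior of $\dom(f^*)$. Third, I will chain two monotonicities. Because $(f^*)'$ is increasing, for every $s \in [0, \ell]$ we have $(f^*)'(y - s) \leq (f^*)'(y) = x$. Because $f''$ is monotonically decreasing by hypothesis, this gives $f''\!\left((f^*)'(y - s)\right) \geq f''(x)$, and therefore $(f^*)''(y - s) \leq 1/f''(x)$. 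Substituting this bound into the integral representation above produces
\begin{align*}
D_{f^*}(y - \ell,\, y) \;\leq\; \frac{1}{f''(x)}\int_0^\ell (\ell - s)\,ds \;=\; \frac{\ell^2}{2 f''(x)},
\end{align*}
which is the claimed inequality.

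The only point requiring care is keeping track of the direction of the inequalities: the assumption that $f''$ is \emph{decreasing} composes with $(f^*)'$ being \emph{increasing} so that moving the argument of $(f^*)''$ from $y$ downward to $y - s$ can only decrease $(f^*)''$. No other real obstacle appears; strict positivity of $f''$, inherited from $f$ being Legendre and strictly convex, ensures $(f^*)''$ stays finite on the interior of its domain, so the integral representation and the pointwise substitution are both valid under the stated assumption $f'(x) - \ell \in \dom(f^*)$.
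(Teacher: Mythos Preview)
Your proof is correct and follows essentially the same approach as the paper. The only cosmetic difference is that you use the integral form of the Taylor remainder and bound the integrand pointwise, whereas the paper invokes the Lagrange mean-value form to produce a single point $\tilde x\in[{f^*}'(f'(x)-\ell),x]$ with $D_{f^*}(f'(x)-\ell,f'(x))=\ell^2/(2f''(\tilde x))$; both then finish with the identical monotonicity chain $(f^*)'$ increasing, $f''$ decreasing.
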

\begin{proof}
By Taylor's theorem, there exists $\tilde x \in [{f^*}'(f'(x)-\ell),x]$, such that 
$D_{f^*}(f'(x)-\ell,f'(x))= \frac{\ell^2}{2f''(\tilde x)}$. $\tilde x$ is smaller than $x$, since ${f^*}'$ is monotonically increasing. Finally, using the fact that the second derivative is decreasing allows to bound $f''(\tilde x)^{-1}\leq f''(x)^{-1}$.
\end{proof}
\begin{claim}
\label{clm:monotony}
For any convex $F$, $L_2\geq L_1$ (coordinate wise), it holds
\begin{align*}
    \overline F^*(-L_1) \geq \overline F^*(-L_2)\,.
\end{align*}
\end{claim}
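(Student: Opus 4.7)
The plan is to expand the definition of $\overline F^*$ and exploit the non-negativity of the simplex. By definition,
\[
\overline F^*(-L) = \max_{x \in \Delta([k])} \bigl(\langle x, -L\rangle - F(x)\bigr).
\]
Fix any $x \in \Delta([k])$. Since $x \geq 0$ coordinate-wise and $L_2 - L_1 \geq 0$ by hypothesis, we have $\langle x, L_2 - L_1\rangle \geq 0$, i.e., $\langle x, -L_2\rangle \leq \langle x, -L_1\rangle$. Subtracting $F(x)$ from both sides gives the pointwise inequality $\langle x, -L_2\rangle - F(x) \leq \langle x, -L_1\rangle - F(x)$ for every $x \in \Delta([k])$. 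Taking the maximum over $x \in \Delta([k])$ on both sides preserves this inequality, yielding $\overline F^*(-L_2) \leq \overline F^*(-L_1)$, which is the claim.

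There is no genuine obstacle here. The only point worth noting is that the non-negativity constraint $x \geq 0$ baked into the simplex is what turns a sign reversal into a monotonicity statement; if $\Delta([k])$ were replaced by an unconstrained domain the conclusion could go either way. Convexity of $F$ plays no role in the argument and appears in the hypothesis only because $\overline F^*$ is conventionally introduced for convex $F$; likewise, no appeal to Legendre structure, KKT conditions, or any of the other preparatory facts from the appendix is required for this particular claim.
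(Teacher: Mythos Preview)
Your proof is correct and follows essentially the same approach as the paper: both arguments use that $x\in\Delta([k])$ implies $x\geq 0$, so $\langle x,-L_2\rangle\leq\langle x,-L_1\rangle$, and then pass to the maximum. The only cosmetic difference is that the paper names the maximizer $\nabla\overline F^*(-L_2)$ and bounds its value directly, whereas you state the pointwise inequality for all $x$ and take the max on both sides; your observation that convexity of $F$ is not actually used is also accurate.
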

\begin{proof}
\begin{align*}
    \overline F^*(-L_2)&=\ip{\nabla \overline F^*(-L_2),-L_2} +F(\nabla \overline F^*(-L_2))\\
    &\leq \ip{\nabla \overline F^*(-L_2),-L_1} +F(\nabla \overline F^*(-L_2))\\
    &\leq \max_{x\in\Delta([k])}\ip{x,-L_1} +F(x)\\
    &=\overline F^*(-L_1)\,.
\end{align*}
\end{proof}

\subsection{Proofs of the Main Lemmas}
\resetnotes
\label{app:main lemmas}
\begin{proof}[Proof of Lemma~\ref{lem:bregman}]

\begin{align*}
\overline F_t^*(-\hat L^{obs}_t-\hat\ell_t)-\overline F_t^*(-\hat L^{obs}_t) + \ip{x_t,\hat\ell_t}
&\overset{\eqnote{
Applies \cref{clm:shift,clm:gradient}.
}}{=}\overline F_t^*(\nabla F_t(x_t)-\hat\ell_t)-\overline F_t^*(\nabla F_t(x_t)) + \ip{x_t,\hat\ell_t}\\
&\overset{\eqnote{
Follows from both parts of \cref{clm:unconstrained}.
}}{\leq}  F_t^*(\nabla F_t(x_t)-\hat\ell_t)- F_t^*(\nabla F_t(x_t)) + \ip{x_t,\hat\ell_t}\\
&= \sum_{i=1}^k D_{f_t^*}(f'_t(x_{t,i})-\hat\ell_{t,i},f'_t(x_{t,i}))\\
&= D_{f_t^*}(f'_t(x_{t,A_t})-\ell_{t,A_t}x_{t,A_t}^{-1},f'_t(x_{t,A_t}))\\
&\overset{\eqnote{
Uses \cref{clm:bregman bound}.
}}{\leq} \frac{1}{2}\ell_{t,A_t}^2x_{t,A_t}^{-2}f''_t(x_{t,A_t})^{-1}\\
&\leq \frac{1}{2}\ell_{t,A_t}^2x_{t,A_t}^{-2}f''_{t1}(x_{t,A_t})^{-1}\\
&= \frac{1}{2}\ell_{t,A_t}^2x_{t,A_t}^{-2}\frac{2x_{t,A_t}^\frac{3}{2}}{\sqrt{t}}\\
&\leq \frac{x_{t,A_t}^{-\frac{1}{2}}}{\sqrt{t}}\,.
\end{align*}
\writeeqnotes
In expectation we get 
\begin{align*}
&\E\left[\overline F_t^*(-\hat L^{obs}_t-\hat\ell_t)-\overline F_t^*(-\hat L^{obs}_t) + \ip{x_t,\hat\ell_t}\right]
\leq \sum_{i=1}^k\frac{\sqrt{x_{t,i}}}{\sqrt{t}}\leq \frac{\sqrt{k}}{\sqrt{t}}\,.
\end{align*}
\end{proof}
\resetnotes 

\begin{proof}[Proof of Lemma~\ref{lem:penalty}]
Let $\tilde x_t = \argmax_{x\in\Delta([k])} \ip{x,-\hat L^{\bar S}_{t}}-F_t(x)$, then
\begin{align*}
\overline F_t^*(-\hat L^{\bar S}_{t}) = \ip{\tilde x_t,-\hat L^{\bar S}_{t}}-F_t(\tilde x_t).
\end{align*}
Furthermore, since $\overline F^*(-\hat L^{\bar S}_{t}) = \max_{x\in\Delta([k])} \ip{x,-\hat L^{\bar S}_{t}}-F(x)$, we have
\begin{align*}
-\overline F_{t-1}^*(-\hat L^{\bar S}_{t}) &\leq -\ip{\tilde x_t,-\hat L^{\bar S}_{t}}+F_{t-1}(\tilde x_t)\,,\\
-\overline F_n^*(-\hat L^{\bar S}_{n+1}) &\leq -\ip{\e_{i^*},-\hat L^{\bar S}_{n+1}} +F_n(\e_{i^*}) \leq \hat L^{\bar S}_{n+1,i^*} = \hat L_{n+1,i^*}\,.
\end{align*}
Plugging these inequalities into the LHS leads to
\begin{align*}
\sum_{t\in\bar S}\Bigg(\overline F_t^*(-\hat L^{\bar S}_{t})-\overline F_t^*(-\hat L^{\bar S}_{t+1}) \Bigg)-\hat L_{n+1,i^*}
&\leq \sum_{t=1}^n\Bigg(\overline F_t^*(-\hat L^{\bar S}_{t})-\overline F_t^*(-\hat L^{\bar S}_{t+1}) \Bigg)-\hat L_{n+1,i^*}\\
&\leq -F_1(\tilde x_1)+\sum_{t=2}^n F_{t-1}(\tilde x_t)-F_{t}(\tilde x_t)\\
&\leq \max_{x\in\Delta([k])}-F_1(x)+\sum_{t=2}^n \max_{x\in\Delta([k])}(F_{t-1}(x)-F_{t}(x))\\
&= -F_n(\ones/k)\\
&= 2\sqrt{kn}+\eta_n^{-1}\log(k)\,,
\end{align*}
where the first step follows by \cref{clm:monotony}.
\end{proof}

\begin{proof}[Proof of Lemma~\ref{lem:stability}]
Note that by definition, for any $t\in \bar S$:
\begin{align*}
    \hat L^{\bar S}_{t+1} = \hat L^{\bar S}_t + \hat\ell_t + \sum_{s\in S: s+d_s=t}\hat\ell_s\geq \hat L^{\bar S}_t + \hat\ell_t\,.
\end{align*}
So by \cref{clm:monotony}: $\overline F_t^*(-\hat L^{\bar S}_{t+1})\leq \overline F_t^*(-\hat L^{\bar S}_{t}-\hat\ell_t)$.
We define $\hat L^{miss}_t = \hat L^{\bar S}_t-\hat L^{obs}_t$.
Then we have for any $t\in\bar S$
\begin{align*}
 -\overline F_t^*(-\hat L^{\bar S}_t)+\overline F_t^*(-\hat L^{\bar S}_{t}-\hat\ell_t)
&\overset{
\eqnote{
By the fundamental theorem of calculus.
}
}{=}-\int_{0}^1 \ip{\hat\ell_t, \nabla\overline F^*_t(-\hat L^{\bar S}_t - x\hat\ell_t)}\,dx\\
&~=\,-\int_{0}^1 \ip{\hat\ell_t, \nabla\overline F^*_t(-\hat L^{obs}_t-\hat L^{miss}_t - x\hat\ell_t)}\,dx\\
&\overset{
\eqnote{
Follows from the fact that $\nabla\overline F^*_t(-L)_{A_t}$ decreases if the loss in coordinates other than $A_t$ is reduced.
}
}{\leq}-\int_{0}^1 \ip{\hat\ell_t, \nabla\overline F^*_t(-\hat L^{obs}_t-\hat L^{miss}_{t,A_t}\e_{A_t} - x\hat\ell_t)}\,dx\,.
\end{align*}
\writeeqnotes
Therefore, we have for any $t\in\bar S$
\begin{align*}
\overline F_t^*(-\hat L^{obs}_t)-\overline F_t^*(-\hat L^{obs}_t-\hat\ell_t)&-\overline F_t^*(-\hat L^{\bar S}_t)+\overline F_t^*(-\hat L^{\bar S}_{t+1})\\
&\overset{\eqnote{
uses the Fundamental theorem of calculus together with the inequality above.
}}{\leq} \int_{0}^1 \ip{\hat\ell_t,\nabla \overline F_t^*(-\hat L^{obs}_t-x\hat\ell_t)}\,dx -\int_{0}^1 \ip{\hat\ell_t, \nabla\overline F^*_t(-\hat L^{obs}_t-\hat L^{miss}_{t,A_t}\e_{A_t} - x\hat\ell_t)}\,dx\\
&\overset{\eqnote{
substitutes $\tilde{z}(x) = \nabla \overline F_t^*(-\hat L^{obs}_t-x\hat\ell_t)$ and applies \cref{clm:gradient}.
}}{=} \int_{0}^1 \ip{\hat\ell_t,\tilde z(x)- \nabla\overline F^*_t(\nabla F_t(\tilde{z}(x))-\hat L^{miss}_{t,A_t}\e_{A_t})}\,dx\\
&\overset{\eqnote{
applies \cref{clm:bound by unconstrained}.
}}{\leq} \int_{0}^1 \ip{\hat\ell_t,\tilde z(x)- \nabla F^*_t(\nabla F_t(\tilde{z}(x))-\hat L^{miss}_{t,A_t}\e_{A_t})}\,dx\\
&= \int_{0}^1 \hat\ell_{t,A_t}(\tilde z_{A_t}(x)- {f^*}'_t(f'_t(\tilde{z}_{A_t}(x))-\hat L^{miss}_{t,A_t})\,dx\\
&\overset{\eqnote{
${f^*}'(t)$ is convex, so $-{f^*}'(f'(\tilde z_{A_t}) - \ell)\leq -\tilde z_{A_t} + {f^*}''(f'(\tilde z_{A_t}))$.
}}{\leq} \int_{0}^1 \hat\ell_{t,A_t}({f^*}''_t(f'_t(\tilde{z}_{A_t}(x)))\hat L^{miss}_{t,A_t}\,dx\\
&=\int_{0}^1 \ell_{t,A_t}x_{t,A_t}^{-1}f''_t(\tilde{z}_{A_t}(x))^{-1}\hat L^{miss}_{t,A_t}\,dx\\
&\overset{\eqnote{
follows because $\tilde z_{A_t}\leq x_{t,A_t}$ and $f_t''(x)^{-1}$ is monotonically increasing.
}}{\leq}\int_{0}^1 \ell_{t,A_t}x_{t,A_t}^{-1}f''_t(x_{t,A_t})^{-1}\hat L^{miss}_{t,A_t}\,dx\\
&\leq  x_{A_t}^{-1}f_{t2}''(x_{A_t})^{-1}\hat L^{miss}_{t,A_t}\\
& = \eta_t\hat L^{miss}_{t,A_t}\,.
\end{align*}
\writeeqnotes
Finally, due to the unbiasedness of the loss estimators we have in expectation
\begin{align*}
\E[\hat L^{miss}_{t,A_t}] = \sum_{s<t}\bbI\{s+d_s\geq t \wedge s\in \bar S\}\ell_{s,A_t} \leq \mathfrak{d}^{\bar S}_t\,.
\end{align*}
\end{proof}

\fi
\end{document}